\def\ps@pprintTitle{%
  \let\@oddhead\@empty
  \let\@evenhead\@empty
  \def\@oddfoot{\reset@font\hfil\thepage\hfil}
  \let\@evenfoot\@oddfoot
}
\newtheorem{definition}{Definition} 
\newtheorem{assumption}{Assumption}
\newtheorem{lemma}{Lemma}
\newtheorem{theorem}{Theorem}
\newtheorem{remark}{Remark}
\let\oldalign\align
\let\oldendalign\endalign
\renewenvironment{align}
  {\linenomathNonumbers\oldalign}
  {\oldendalign\endlinenomath}
\let\oldequation\equation
\let\oldendequation\endequation
\renewenvironment{equation}
  {\linenomathNonumbers\oldequation}
  {\oldendequation\endlinenomath}
\let\oldmultline\multline
\let\oldendmultline\endmultline
\renewenvironment{multline}
  {\linenomathNonumbers\oldmultline}
  {\oldendmultline\endlinenomath}
\journal{}
\begin{document}

\begin{frontmatter}

\title{Multi-Agent Continuous Control with Generative Flow Networks}

\author[label1]{Shuang~Luo}
\ead{luoshuang@zju.edu.cn}
\author[label2]{Yinchuan~Li\corref{correspondingauthor}}
\ead{liyinchuan@huawei.com}
\author[label3]{Shunyu~Liu}
\ead{liushunyu@zju.edu.cn}
\author[label4]{Xu~Zhang}
\ead{connorbitzx@gmail.com}
\author[label2]{Yunfeng~Shao}
\ead{shaoyunfeng@huawei.com}
\author[label1]{Chao~Wu\corref{correspondingauthor}}
\ead{chao.wu@zju.edu.cn}

\cortext[correspondingauthor]{Corresponding author. This article has been accepted for publication by Neural Networks. The published version is available at \url{https://doi.org/10.1016/j.neunet.2024.106243}. Digital Obiect Identifer: 10.1016/j.neunet.2024.106243. \copyright 2024. This manuscript version is made available under the CC-BY-NC-ND 4.0 license \url{https://creativecommons.org/licenses/by-nc-nd/4.0/}}
\address[label1]{School of Public Affairs, Zhejiang University}
\address[label2]{Huawei Noah’s Ark Lab}
\address[label3]{College of Computer Science and Technology, Zhejiang University}
\address[label4]{School of Artificial Intelligence, Xidian University}

\begin{abstract}
  Generative Flow Networks (GFlowNets) aim to generate diverse trajectories from a distribution in which the final states of the trajectories are proportional to the reward, serving as a powerful alternative to reinforcement learning for exploratory control tasks. However, the individual-flow matching constraint in GFlowNets limits their applications for multi-agent systems, especially continuous joint-control problems. 
  In this paper, we propose a novel \emph{Multi-Agent generative Continuous Flow Networks}~(MACFN) method to enable multiple agents to perform cooperative exploration for various compositional continuous objects. 
  Technically, MACFN trains decentralized individual-flow-based policies in a centralized global-flow-based matching fashion. During centralized training, MACFN introduces a continuous flow decomposition network to deduce the flow contributions of each agent in the presence of only global rewards. Then agents can deliver actions solely based on their assigned local flow in a decentralized way, forming a joint policy distribution proportional to the rewards. To guarantee the expressiveness of continuous flow decomposition, we theoretically derive a consistency condition on the decomposition network.
  Experimental results demonstrate that the proposed method yields results superior to the state-of-the-art counterparts and better exploration capability. Our code is available at 
\url{https://github.com/isluoshuang/MACFN}.
\end{abstract}

\begin{keyword}
%% keywords here, in the form: keyword \sep keyword
Continuous Control \sep Generative Flow Networks \sep  Multi-agent System.
\end{keyword}

\end{frontmatter}

% \linenumbers

%% main text
\section{Introduction}\label{sec:introduction}
Generative Flow Networks~(GFlowNets) have recently been attracting increasing attention from research communities~\citep{bengio2021gflownet}, attributed to their capability to obtain various solutions for exploratory control tasks.
Unlike conventional Reinforcement Learning~(RL)~\citep{sutton2018reinforcement}, which aims to maximize the accumulative rewards for a single optimal sequence, GFlowNets are expected to generate a diverse set of high-return candidates with probabilities proportional to the given reward distribution~\citep{bengio2021flow}.
Specifically, the generation process of GFlowNets is to form a Directed Acyclic Graph~(DAG) consisting of the discrete nodes in a trajectory. Then GFlowNets compute the flow matching loss by traversing the inflows and outflows of each node. To realize continuous control, \citet{cfn} and \citet{lahlou2023theory} further extend GFlowNets with continuous flow matching and verify its effectiveness experimentally.

Despite the encouraging results, GFlowNets only consider the individual-flow matching constraint, restricting it to single-agent tasks. 
Many real-world environments inevitably involve cooperative multi-agent problems, such as robotics control~\citep{afrin2021resource}, traffic light control~\citep{wu2020multi, YANG2021265}, and smart grid control~\citep{haes2019survey}. 
Designing GFlowNets for Multi-Agent Systems~(MAS), where a group of agents works collaboratively for one common goal, is perceived as a significantly more challenging problem than the single-agent counterpart due to several peculiar limitations: 
\emph{1)~The curse of dimensionality}. A straightforward way to realize multi-agent GFlowNets is regarding the entire MAS as a single agent and optimizing a joint-flow matching loss. However, this way is often unacceptable as the joint continuous action-observation space grows exponentially with the number of agents. 
\emph{2)~Partial observability}. Learning independent flow networks for each agent can encounter non-stationarity due to inaccurate flow estimation without global information.
\emph{3)~Reward sparsity and multimodality}. The agents need to deduce their multimodal flow contributions given only the terminating reward.
A very recent work~\citep{magfn} tries to use multi-flow networks to mitigate the MAS problem, which, however, is still limited to the discrete space.

In this work, we thus propose \emph{Multi-Agent generative Continuous Flow Networks}, abbreviated as MACFN, to enhance GFlowNets for multi-agent continuous control tasks. 
Technically, we adopt a \emph{Centralized Training with Decentralized Execution}~(CTDE)~\citep{MADDPG} paradigm, where agents can learn decentralized individual-flow-based policies by optimizing the global-flow-based matching loss in a centralized manner.
During centralized training, MACFN constructs a continuous flow decomposition network, taking advantage of additional global information to disentangle the joint flow function into agent-wise flow functions. This centralized training mechanism can effectively mitigate the non-stationarity issue that arises from partial observability.
Furthermore, agents can make decisions relying solely on their local flow functions in a decentralized way, avoiding the
the curse of dimensionality that occurs as the number of agents increases.
To facilitate effective multi-agent flow decomposition, we establish theoretical consistency~on~the~decomposition network. 
This enables multiple agents to collaboratively generate diverse joint actions with probabilities proportional to the sparse reward signals received only upon termination.
Additionally, we introduce a sampling approach to approximate the integrals over inflows and outflows in continuous flow matching.
Our main contributions can be summarized as follows:
\begin{itemize}
\item To the best of our knowledge, this work is the first dedicated attempt towards extending GFlowNets to address the multi-agent continuous control problem. We introduce a novel method named MACFN, enabling multiple agents to generate diverse cooperative solutions using only terminating rewards.
\item We propose a continuous flow decomposition network that ensures consistency between global and individual flows. Consequently, agents can make decisions based solely on their respective flow contributions, while also benefiting from centralized training that incorporates additional global information for flow decomposition.
\item We use a sampling-based approach to approximate the flow distribution and theoretically analyze the convergence of the proposed sampling algorithm under the prediction error of the estimated parent nodes.
\item Experiments conducted on several multi-agent continuous control tasks with sparse rewards demonstrate that MACFN significantly enhances the exploration capability of the agent and practically outperforms the current state-of-the-art MARL algorithms.
\end{itemize}

\section{Related Work}\label{sec:related}
We briefly review recent advances closely related to this work, including generative flow networks and cooperative multi-agent reinforcement learning.

\subsection{Generative Flow Networks}
Generative Flow Networks (GFlowNets) aim to generate a diverse set of candidates in an active learning fashion, with the training objective that samples trajectories from a distribution proportional to their associated rewards.
In recent years, GFlowNets have attracted substantial attention in various applications, such as molecule discovery~\citep{bengio2021flow,pan2022generative,ekbote2022consistent}, Bayesian structure learning~\citep{deleu2022bayesian,nishikawa2022bayesian}, biological sequence design~\citep{malkin2022trajectory,madan2022learning,jain2022biological,zhang2022unifying}, and discrete images~\citep{zhang2022generative}.
Although these recent algorithms have achieved encouraging results, their development and theoretical foundations are constrained to environments with discrete spaces.
Naturally, there have been several efforts to extend GFlowNets for continuous structures.
\citet{cfn} define the flow of a continuous state as the integral of the complete trajectory passing through that state, extending the flow-matching conditions~\citep{bengio2021gflownet} to continuous domains.
\citet{lahlou2023theory} theoretically extend existent GFlowNet training objectives, such as flow-matching~\citep{bengio2021flow}, detailed
balance~\citep{bengio2021gflownet} and trajectory balance~\citep{malkin2022trajectory}, to spaces with discrete and continuous components.
Nevertheless, currently, GFlowNets cannot accommodate Multi-Agent Systems~(MAS)~\citep{qin2016recent,COMA}.
A very recent work~\citep{magfn} tries to tackle the MAS problem by using the multi-flow network, but is still limited to the discrete setting.

\subsection{Cooperative Multi-Agent Reinforcement Learning}
Cooperative Multi-Agent Reinforcement Learning (MARL) has emerged as a promising approach to enable autonomous agents to tackle various tasks such as autonomous driving~\citep{yu2019distributed,shalev2016safe}, video games~\citep{vinyals2019grandmaster,berner2019dota,kurach2020google} and sensor networks~\citep{zhang2011coordinated,ye2015multi}.
However, learning joint policies for multi-agent systems remains challenging. Training agents jointly~\citep{claus1998dynamics} means that agents select joint actions conditioned on the global state or joint observation, leading to computation complexity and communication constraints.
By contrast, training agents policy independently~\citep{tan1993multi}
tackles the above problem but suffers from non-stationarity.
A hybrid paradigm called Centralized Training with Decentralized Execution~(CTDE)~\citep{MADDPG,WANG2023359} combines the advantages of the above two methods and is widely applied in both policy-based~\citep{COMA} and value-based~\citep{QMIX} methods.
Policy-based methods~\citep{MADDPG,COMA,MAPPO,HATRPO} introduce a centralized critic to compute the gradient for the local actors.
Value-based methods~\citep{VDN,QMIX,QPLEX,peng2021facmac,ZHANG20221} decompose the joint value function into individual value functions to guide individual behaviors.
The goal of these methods is to maximize the expectation of accumulative rewards, where agents always sample action with~the~highest~return. 

To enhance the efficiency of multi-agent exploration, IRAT~\citep{wang2022individual} and LIGS~\citep{mguni2021ligs} propose using intrinsic rewards to motivate agents to explore unknown states. CMAE~\citep{liu2021cooperative} incentivizes collaborative exploration by having multiple agents strive to achieve a common goal based on unexplored states. PMIC~\citep{li2022pmic} facilitates better collaboration by maximizing mutual information between the global state and superior joint actions while minimizing the mutual information associated with inferior ones. However, the inherent complexity of multi-agent scenarios coupled with the potential inaccuracy of predicting reward and state can lead to instability. 
These exploration methods primarily rely on state uncertainty to guide agent learning, presenting agents with the significant challenge of identifying which states necessitate further exploration. Moreover, the state-uncertainty-based methods often benefit from local exploration, while our proposed MACFN focuses on the diversity of entire trajectories, leading to a long-term exploratory strategy. Additionally, the ultimate goal of these methods is to maximize the cumulative rewards for only a single optimal sequence. In contrast, MACFN aims to obtain a diverse set of high-return solutions, with the selection probabilities being proportional to the reward distribution, thereby promoting a more generalized exploration.

\section{Preliminaries}\label{sec:preliminaries}
In this section, we formally define the cooperative multi-agent problem under the Decentralized Partially Observable Markov Decision Process~(Dec-POMDP). Then we introduce the flow modeling of Generative Flow Networks~(GFlowNets).

\subsection{Dec-POMDP}
A fully cooperative multi-agent sequential decision-making problem can be modeled as a Dec-POMDP~\citep{Dec-POMDP}, which is formally defined by the tuple:
\begin{align}
M=<\mathcal{S}, \mathcal{A}, \mathcal{I}, T, R, \Omega, U >,
\end{align}
where $\mathcal{S}$ denotes the global state space, $\mathcal{A} = \mathcal{A}^1 \times \cdots \times \mathcal{A}^N$ denotes the joint action space for $N$ agents. Here both the state and action space are continuous.
We consider partially observable settings, and each agent $i \in \mathcal{I}$ can only access a partial observation $o^{i} \in {\Omega}$ according to the observation function $U(s,i)$. 
At each timestep $t$, each agent chooses an action $a^i_t \in \mathcal{A}^i$, forming a joint action $\boldsymbol{a}_t\in \mathcal{A}$, leading to a state transition to the next state $s_{t+1}$ in the environment according to the state transition function $s_{t+1} = T(s_t,\boldsymbol{a}_t)$. We assume that for any state pair $(s_t,s_{t+1})$, a certain $\boldsymbol{a}_t$ is the only way from $s_t$ to $s_{t+1}$.
A complete trajectory $\tau = (s_0,...,s_f)$ is defined as a sequence state of $\mathcal{S}$, where $s_0$ is the initial state and $s_f$ is the terminal state. 
$ {r_t} = R(s_t, \boldsymbol{a}_t): \mathcal{S} \times \mathcal{A} \rightarrow \mathbb{R}$ is the global reward function~shared~by~all~agents.

\subsection{GFlowNets}
GFlowNets consider the Dec-POMDP as a flow network~\citep{bengio2021flow}, which constructs the set of complete trajectories $\mathcal{T}$ as Directed Acyclic Graph~(DAG). Thus, the trajectory $\tau \in \mathcal{T}$ satisfies the acyclic constraint that $\forall s_j \in \tau, s_m \in \tau, j \neq m$, we get $s_j \neq s_m$.
Define $F(\tau)$ as a non-negative trajectory flow function. 
For a given state s, the state flow $F(s)=\sum_{\tau:s \in \tau} F(\tau)$ is defined as the total flow through the state. 
For a given state transition $s \rightarrow s^{\prime}$, the edge flow $F\left(s \rightarrow s^{\prime}\right)=\sum_{\tau: s \rightarrow s^{\prime} \in \tau} F(\tau)$ is define as the flow through the edge.
Define $P_F(\cdot)$ as the corresponding forward transition probability over the state~\citep{malkin2022trajectory}, i.e.,
\begin{align}
P_F(s_{t+1}|s_t) := \frac{F(s_t \rightarrow  s_{t+1})}{F(s_t)}.
\end{align}

GFlowNets aim to generate a distribution proportional to the given reward function. 
To achieve this goal, GFlowNets converge if they satisfy the flow-matching conditions~\citep{bengio2021flow}: for all states except the initial states, the flow incoming to a state must match the outgoing flow. For a continuous control task, the state flow $F(s)$ is calculated as the integral of all trajectory flows passing through the state $F(s) = \int_{\tau:s\in \tau} F(\tau) \mathrm{d} \tau$~\citep{cfn}.
For any state $s_{t}$, the continuous flow matching conditions are described as:
\begin{equation}
\int_{s_{t-1}\in \mathcal{P} (s_{t})}  F(s_{t-1} \rightarrow s_{t}) \mathrm{d} s_{t-1} =\\ \int_{s_{t+1} \in \mathcal{C} (s_{t})}  F(s_{t} \rightarrow s_{t+1}) \mathrm{d} s_{t+1} +R(s_{t}),
\end{equation}
where $\mathcal{P}(s_t)$ is the parent set of $s_t$, defined as $\mathcal{P}(s_t) = \{ s \in \mathcal{S} : T(s, a \in \mathcal{A}) = s_t \}$. Similarly, $\mathcal{C}(s_t)$ is the children set of $s_t$, defined as $\mathcal{C}(s_t) = \{ s \in \mathcal{S} : T(s_t, a \in \mathcal{A}) = s \} $.
The reward signals are sparse in our setting, where rewards are given only upon the termination of a trajectory and remain zero during all other times. 
Additionally, it is worth mentioning that for any given state that is neither initial nor terminal, the inflows are equal to the outflows. Terminal states serve as boundary conditions, i.e., $\mathcal{A}(s_f) = \emptyset$, where the inflows are equivalent to the aforementioned rewards, and no outflows are present. A transition $s \rightarrow s_f$ into the terminal state is defined as the terminating transition and the corresponding flow $F(s \rightarrow s_f)$ is defined as the terminating flow.

\section{Methodology}\label{sec:methodology}
In what follows, we first provide the theoretical formulation of \emph{Multi-Agent generative Continuous Flow Networks}~(MACFN). Then we further detail the training framework based on the proposed MACFN and summarize the complete optimization objective.

\subsection{MACFN: Theoretical Formulation}

In MACFN, each agent learns its individual-flow-based policies in a centralized global-flow-based matching manner. Define the joint edge flow as $F(s_t, \boldsymbol{a}_t) = F(s_t \rightarrow  s_{t+1})$, and the individual edge flow of agent $i$  as $F(o_t^i, a_t^i) = F(o_t^i \rightarrow  o_{t+1}^i)$, which only depends on each agent’s local observations. 
To enable efficient learning among agents, we propose Definition~\ref{def-decomposition} to learn an optimal flow decomposition from the joint flow, helping to deduce the flow contributions of each agent.

\begin{definition}[Global Flow Decomposition]
\label{def-decomposition}
For any state $s_{t}$ and $\boldsymbol{a}_t$, the joint edge flow $F(s_t, \boldsymbol{a}_t)$ is a product of individual edge flow $F(o_t, a_t)$ across $N$ agents, i.e.,
\begin{align}
F(s_t, \boldsymbol{a}_t) = \prod_{i=1}^N F_i(o_t^i, a_t^i).
\end{align}
\end{definition}

Based on Definition~\ref{def-decomposition}, we have Lemma~\ref{lm:decomposition} proved in \ref{PL1}.

\begin{lemma} 
\label{lm:decomposition} 
Let $\pi(\boldsymbol{a_t} \mid s_t)=\frac{F\left(s_t, \boldsymbol{a}_t\right)}{F\left(s_t\right)}$ denotes the joint policy, and $\pi_i\left(a_i \mid o_i\right)$ denotes the individual policy of agent $i$. 
Under Definition~\ref{def-decomposition}, we have
\begin{align}
\pi(\boldsymbol{a} \mid s) =\prod_{i=1}^N \pi_i\left(a_i \mid o_i\right).
\end{align}

\end{lemma}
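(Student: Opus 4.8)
The plan is to factorize the denominator of the joint policy into a product of per-agent normalizers and then cancel it against the factorized numerator supplied by Definition~\ref{def-decomposition}.

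First I would rewrite the joint state flow $F(s_t)$ as an outflow integral. Since $s_t$ is neither initial nor terminal, its state flow equals its total outflow, and because the standing assumption guarantees that each joint action $\boldsymbol{a}_t$ induces a unique successor $s_{t+1}=T(s_t,\boldsymbol{a}_t)$, the outflow can be parametrized directly by the joint action:
\[
F(s_t)=\int_{\mathcal{A}}F(s_t,\boldsymbol{a}_t)\,\mathrm{d}\boldsymbol{a}_t .
\]
Here I use the definition $F(s_t,\boldsymbol{a}_t)=F(s_t\rightarrow s_{t+1})$ to identify the edge flow with a function of the action.

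Next I would substitute the product form $F(s_t,\boldsymbol{a}_t)=\prod_{i=1}^N F_i(o_t^i,a_t^i)$ from Definition~\ref{def-decomposition}. Because the joint action space splits as $\mathcal{A}=\mathcal{A}^1\times\cdots\times\mathcal{A}^N$, each factor $F_i$ depends only on its own agent's action $a_t^i$, and the flows are non-negative, Fubini--Tonelli lets me separate the multiple integral into a product of single integrals:
\[
F(s_t)=\prod_{i=1}^N\int_{\mathcal{A}^i}F_i(o_t^i,a_t^i)\,\mathrm{d}a_t^i
=:\prod_{i=1}^N F_i(o_t^i),
\]
where I define the individual state flow $F_i(o_t^i)$ as the marginal of agent $i$'s edge flow over its own action space, the direct per-agent analogue of $F(s_t)$.

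Finally I would form the policy ratio and cancel term by term,
\[
\pi(\boldsymbol{a}\mid s)=\frac{F(s_t,\boldsymbol{a}_t)}{F(s_t)}
=\frac{\prod_{i=1}^N F_i(o_t^i,a_t^i)}{\prod_{i=1}^N F_i(o_t^i)}
=\prod_{i=1}^N\frac{F_i(o_t^i,a_t^i)}{F_i(o_t^i)}
=\prod_{i=1}^N\pi_i(a_i\mid o_i),
\]
identifying each factor as the individual policy $\pi_i(a_i\mid o_i)=F_i(o_t^i,a_t^i)/F_i(o_t^i)$. The main obstacle is establishing that $F(s_t)$ itself factorizes as $\prod_{i=1}^N F_i(o_t^i)$: this hinges on (i) the uniqueness-of-transition assumption, which legitimizes replacing the integral over successor states by an integral over joint actions, and (ii) the measure-theoretic step of splitting a product integrand over a product space, which is clean precisely because the individual flows are non-negative and separable. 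Once the denominator is shown to factorize, the cancellation yielding the product policy is immediate.
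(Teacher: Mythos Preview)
Your proposal is correct and follows essentially the same approach as the paper: both arguments express $F(s_t)$ as the action integral of the edge flow, apply Definition~\ref{def-decomposition}, split the product-space integral via Fubini--Tonelli into $\prod_{i=1}^N F_i(o_t^i)$, and then cancel numerator against denominator to obtain the factorized policy. The only cosmetic difference is that the paper runs the Fubini step in the reverse direction (starting from $\prod_i F_i(o_t^i)$ and arriving at $F(s_t)$), but the content is identical.
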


\begin{remark}
    Lemma~\ref{lm:decomposition} provides consistency between individual and joint policies. It indicates that if the global flow decomposition satisfies Definition~\ref{def-decomposition}, the individual policies learned by each agent depend only on local observations. Since the joint policies are proportional to the given reward function, each agent conducts the individual policy solely proportional to that reward in the same way.
\end{remark}

We define the joint continuous outflows and the joint continuous inflows of state $s_t$ in Definitions~\ref{def-outflows} and \ref{def-inflows}.
\begin{definition}[Joint Continuous Outflows]\label{def-outflows}
For any state $s_{t}$, the outflows are the integral of flows passing through state $s_{t}$, i.e.,
\begin{equation}
    \int_{s\in \mathcal{C} (s_t)}  F(s_t \rightarrow s) \mathrm{d} s.
\end{equation}
\end{definition}

\begin{definition}[Joint Continuous Inflows]\label{def-inflows}
For any state $s_{t}$, its inflows are the integral of flows that can reach state $s_{t}$, i.e., 
\begin{equation}
    \int_{s\in \mathcal{P} (s_t)}  F(s\rightarrow s_t) \mathrm{d} s.
\end{equation}
\end{definition}

Since the flow function $F(o_t^i)$ of each agent is independent, we can further obtain Lemma~\ref{lm:inflowandoutflow} under Assumption~\ref{assumption0}. The proof is presented in \ref{PL2}.

\begin{assumption} \label{assumption0}
    Assume that for any state pair $(s_t,s_{t+1})$, there is a unique joint action $\boldsymbol{a_t}$ such that $T(s_t,\boldsymbol{a_t}) = s_{t+1}$, which means that taking action $\boldsymbol{a_t}$ in $s_t$ is the only way to get to $s_{t+1}$. And assume actions are the translation actions, i.e., $T(s, a) = s + a$.
\end{assumption}

Assumption~\ref{assumption0} is used in Lemma~\ref{lm:inflowandoutflow} to find the parent nodes when calculating the inflows. We define an inverse transition network $G_\phi$ to find $o_t^i$ when given $o_{t+1}^i$ and $a_t^i$, i.e., $o_{t}^i \leftarrow G_\phi(o_{t+1}^i, a_t^i)$. 
This assumption is a property of many environments with deterministic state transitions, such as robot locomotion and traffic simulation. In the robot locomotion scenario, each robot's movement can be considered a translation action. Here, the assumption holds as the action taken by the robots (joint action of all robots) uniquely determines their next positions (states). Similarly, in traffic simulation with multiple autonomous vehicles, each vehicle's movement from one position to another is also treated as a translation action.

\begin{lemma} 
\label{lm:inflowandoutflow} 
For any state $s_{t}$, its outflows and inflows are calculated as follows: 
\begin{align}
    &\int_{s\in \mathcal{C} (s_t)}  F(s_t \rightarrow s) \mathrm{d} s = \int_{\boldsymbol{a} \in \mathcal{A}} F(s_t,\boldsymbol{a}_t) \mathrm{d} \boldsymbol{a}_t 
    \\&= \int_{\boldsymbol{a} \in \mathcal{A}} \prod_{i=1}^N F_i(o_t^i, a_t^i) \mathrm{d} \boldsymbol{a}_t
    = \prod_{i=1}^N \int_{a_t^i \in \mathcal{A}^i} F_i(o_t^i, a_t^i) \mathrm{d} a_t^i,
\end{align}
and
\begin{align}
    &\int_{s\in \mathcal{P} (s_t)}  F(s\rightarrow s_t) \mathrm{d} s  = \int_{\boldsymbol{a}:T(s,\boldsymbol{a})= s_{t}} F(s,\boldsymbol{a}) \mathrm{d} \boldsymbol{a} \\
    &=\int_{\boldsymbol{a}:T(s,\boldsymbol{a})= s_{t}} \prod_{i=1}^N F_i(o^i, a^i) \mathrm{d} \boldsymbol{a} 
    =\prod_{i=1}^N\int_{a^i:T(o^i,a^i)= o_t^i} F_i(G_\phi(o_t^i,a^i), a^i) \mathrm{d} a^i,
\end{align}
where $\boldsymbol{a}$ is the unique action that transition to $s_t$ from $s$ and $o^i=G_\phi(o_t^i,a^i)$ with $o_t^i=T(o^i,a^i)$.
\end{lemma}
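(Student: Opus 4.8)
The plan is to prove both identities by a common three-step template: convert each integral over neighbouring states into an integral over joint actions by a change of variables, substitute the product decomposition of Definition~\ref{def-decomposition}, and then factor the resulting integral over the Cartesian product action space into a product of single-agent integrals via Tonelli's theorem (legitimate because every $F_i$ is non-negative, so the flows are measurable and non-negative). Throughout, the two ingredients doing the real work are Assumption~\ref{assumption0}, which makes the state-to-action correspondence a bijection with unit Jacobian, and Definition~\ref{def-decomposition}, which supplies the multiplicative structure that Tonelli separates.

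For the outflows, I would first note that by Assumption~\ref{assumption0} every child $s \in \mathcal{C}(s_t)$ is reached from $s_t$ by a unique joint action $\boldsymbol{a}_t$, and since the dynamics are translations, $s = s_t + \boldsymbol{a}_t$; hence $\boldsymbol{a}_t \mapsto s$ is a bijection $\mathcal{A} \to \mathcal{C}(s_t)$ with Jacobian one, so $\mathrm{d}s = \mathrm{d}\boldsymbol{a}_t$ and $F(s_t \rightarrow s) = F(s_t,\boldsymbol{a}_t)$ by the definition of the joint edge flow. This yields the first equality. Substituting $F(s_t,\boldsymbol{a}_t) = \prod_{i=1}^N F_i(o_t^i, a_t^i)$ from Definition~\ref{def-decomposition} gives the second. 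Finally, because $\mathcal{A} = \mathcal{A}^1 \times \cdots \times \mathcal{A}^N$ and the integrand is a product whose $i$-th factor depends only on $a_t^i$ (the local observations $o_t^i$ being fixed once $s_t$ is fixed), Tonelli lets me split the integral into $\prod_{i=1}^N \int_{\mathcal{A}^i} F_i(o_t^i, a_t^i)\,\mathrm{d}a_t^i$, closing the outflow chain.

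The inflow case follows the same skeleton but is the delicate one. Here I would change variables from parents $s \in \mathcal{P}(s_t)$ to the unique action $\boldsymbol{a}$ with $T(s,\boldsymbol{a}) = s_t$, i.e. $s = s_t - \boldsymbol{a}$, again a unit-Jacobian bijection, which delivers the first two equalities as before with $F(s\rightarrow s_t) = F(s,\boldsymbol{a}) = \prod_{i=1}^N F_i(o^i, a^i)$. The subtlety is the final factorization: the parent observation is no longer fixed but recovered per agent through the inverse transition network, $o^i = G_\phi(o_t^i, a^i)$ with $o_t^i = T(o^i, a^i)$. I would argue that Assumption~\ref{assumption0} forces the global constraint $T(s,\boldsymbol{a}) = s_t$ to decompose coordinate-wise into local constraints $T(o^i, a^i) = o_t^i$, so the domain $\{\boldsymbol{a} : T(s,\boldsymbol{a}) = s_t\}$ is itself a Cartesian product $\prod_i \{a^i : T(o^i,a^i) = o_t^i\}$; since after the substitution each factor $F_i(G_\phi(o_t^i, a^i), a^i)$ still depends on the single variable $a^i$, Tonelli again separates the integral into the claimed product.

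I expect the main obstacle to be precisely this last factorization of the inflow integral: one must verify that both the integration domain and the integrand genuinely split across agents, which rests on showing the inverse map $G_\phi$ acts independently on each agent's coordinate and that the constraint set factorizes. This is exactly where Assumption~\ref{assumption0} is indispensable, since deterministic translation dynamics with a unique generating action guarantee the well-definedness and per-agent action of $G_\phi$, and where Definition~\ref{def-decomposition} is needed to keep the integrand multiplicative after the change of variables.
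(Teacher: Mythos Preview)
Your proposal is correct and follows essentially the same three-step route as the paper's proof: convert the state integrals to action integrals via Assumption~\ref{assumption0}, substitute the product decomposition of Definition~\ref{def-decomposition}, and then use the Cartesian structure $\mathcal{A} = \mathcal{A}^1 \times \cdots \times \mathcal{A}^N$ together with $o^i = G_\phi(o_t^i,a^i)$ to split into per-agent integrals. If anything, you are more explicit than the paper about the unit Jacobian, the appeal to Tonelli, and the coordinate-wise factorization of the inflow constraint set, which the paper passes over in a single line.
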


In order to parameterize the Markovian flows, we propose the joint continuous flow matching condition as Lemma~\ref{theorem1}.

\begin{lemma}[Joint Continuous Flow Matching Condition]
\label{theorem1}
Consider a non-negative function $\hat F(s_t,\boldsymbol{a}_t)$  taking a state $s_t \in \mathcal{S}$ and an action $\boldsymbol{a}_t\in \mathcal{A}$ as inputs. Then we have $\hat F$ corresponds to a flow if and only if the following continuous flow matching conditions are satisfied:
\begin{align}   
        \forall s_t > s_0, ~ \hat F(s_t) 
        =  \int_{s:T(s,\boldsymbol{a}) = s_t} \hat{F}(s,\boldsymbol{a}:s\rightarrow s_t) \mathrm{d} s = \prod_{i=1}^N \int_{a^i \in \mathcal{A}^i} ~ \hat F_i(o^i, a^i) \mathrm{d} a^i 
\end{align}
and
\begin{align}
        \forall s_t < s_f, ~ \hat F(s_t) =  \int_{\boldsymbol{a} \in \mathcal{A}} \hat F(s_t,\boldsymbol{a}_t) \mathrm{d} \boldsymbol{a}_t = \prod_{i=1}^N \int_{a_t^i \in \mathcal{A}^i} ~ \hat F_i(o_t^i, a_t^i) \mathrm{d} a_t^i.
\end{align}
Furthermore, $\hat F$ uniquely defines a Markovian flow $F$ matching $\hat F$ such that
\begin{equation}
    F(\tau) = \frac{\prod_{t=1}^{n+1} \hat F(s_{t-1} \rightarrow s_t)}{\prod_{t=1}^{n}\hat F(s_t)}.
\end{equation}
\end{lemma}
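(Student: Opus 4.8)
The plan is to treat this as a biconditional together with a uniqueness claim, following the template of the discrete GFlowNet foundations result but carried out with integrals in place of sums. The two product factorizations appearing on the right-hand sides are not the substance of the statement: they follow immediately from Definition~\ref{def-decomposition} together with the outflow and inflow computations already established in Lemma~\ref{lm:inflowandoutflow}. Hence I would first reduce both displayed identities to their scalar forms, namely $\hat F(s_t) = \int_{s\in\mathcal{P}(s_t)} \hat F(s\rightarrow s_t)\,\mathrm{d}s$ for $s_t>s_0$ and $\hat F(s_t) = \int_{s\in\mathcal{C}(s_t)} \hat F(s_t\rightarrow s)\,\mathrm{d}s$ for $s_t<s_f$, and prove the equivalence and uniqueness at that level; the agent-wise products are then reattached verbatim via Lemma~\ref{lm:inflowandoutflow}.

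For the forward direction I would assume that $\hat F$ is the edge and state flow of a Markovian flow $F$, i.e.\ $\hat F(s) = \int_{\tau:s\in\tau} F(\tau)\,\mathrm{d}\tau$ and $\hat F(s\rightarrow s') = \int_{\tau:s\rightarrow s'\in\tau} F(\tau)\,\mathrm{d}\tau$. Because the complete trajectories form a DAG under the acyclicity constraint and Assumption~\ref{assumption0} guarantees that each state pair is joined by a unique action, every trajectory through a non-terminal $s_t$ continues to exactly one child and every trajectory through a non-initial $s_t$ arrives from exactly one parent. Partitioning the trajectories through $s_t$ according to that child (respectively parent) and exchanging the order of integration then yields $\hat F(s_t)=\int_{\mathcal{C}(s_t)}\hat F(s_t\rightarrow s)\,\mathrm{d}s = \int_{\mathcal{P}(s_t)}\hat F(s\rightarrow s_t)\,\mathrm{d}s$, which are precisely the two matching conditions.

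For the converse I would take a $\hat F$ satisfying the matching conditions and define the forward kernel $P_F(s_{t+1}\mid s_t) := \hat F(s_t\rightarrow s_{t+1})/\hat F(s_t)$. The outflow condition is exactly the statement that $P_F(\cdot\mid s_t)$ integrates to one over $\mathcal{C}(s_t)$, so it is a bona fide transition density; setting $F(\tau) := \hat F(s_0)\prod_{t=1}^{n+1}P_F(s_t\mid s_{t-1})$ gives a non-negative Markovian trajectory flow, and telescoping the ratios reproduces the claimed closed form $F(\tau)=\prod_{t=1}^{n+1}\hat F(s_{t-1}\rightarrow s_t)/\prod_{t=1}^{n}\hat F(s_t)$. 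It then remains to check that this $F$ recovers $\hat F$ as its marginals, which I would do by induction along the DAG: the base case fixes the flow at $s_0$ to $\hat F(s_0)$, and in the inductive step the flow accumulated at $s_t$ from its parents is $\int_{\mathcal{P}(s_t)}\hat F(s)P_F(s_t\mid s)\,\mathrm{d}s = \int_{\mathcal{P}(s_t)}\hat F(s\rightarrow s_t)\,\mathrm{d}s = \hat F(s_t)$, where the last equality is exactly the inflow condition, closing the induction. Markovianity is immediate from the product form, and uniqueness follows because any Markovian flow matching $\hat F$ must share the same edge and state flows, hence the same $P_F$ and the same initial flow $\hat F(s_0)$, and is therefore identical to the constructed $F$.

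The main obstacle is the continuous verification in this last step rather than any single algebraic identity. I expect the delicate points to be (i) justifying the interchange of integration with the product over agents and over trajectory steps, a Fubini-type argument for which the non-negativity of $\hat F$ supplies most of the needed integrability, and (ii) making the parent and child integration domains rigorous in continuous space, where Assumption~\ref{assumption0} and the inverse transition map $o^i\leftarrow G_\phi(o_{t+1}^i,a_t^i)$ are what render $\mathcal{P}(s_t)$ a well-parameterized integration set and keep the induction over a continuum of steps well-posed.
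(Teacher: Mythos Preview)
Your proposal is correct and is in fact considerably more detailed than the paper's own proof, which consists of a single sentence deferring to Theorem~1 of \citet{cfn}. What you have written is precisely a reconstruction of that cited argument, adapted to the multi-agent setting by first stripping off the product factorizations via Lemma~\ref{lm:inflowandoutflow} and then running the standard flow-matching equivalence and uniqueness proof at the level of joint edge and state flows; this is exactly the reduction the paper has in mind but does not spell out.
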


\begin{proof}
The proof is trivial by following the proof of Theorem $1$ in~\citet{cfn}.
\end{proof}

\begin{remark}
    Lemma~\ref{theorem1} provides the flow matching condition for joint continuous flow. ~\citet{bengio2021gflownet} initially introduces the flow matching condition, demonstrating that non-negative functions of states and edges can be used to define a Markovian flow. Then ~\citet{cfn} formalizes this condition for continuous state and action spaces. In multi-agent scenarios, we extend the flow matching condition to Lemma~\ref{theorem1}, which guarantees the existence of a distinct Markovian flow when the non-negative flow function satisfies the joint flow matching conditions.
\end{remark}

Based on Lemma~\ref{theorem1}, we propose the following joint continuous loss function to train the flow network:

\begin{equation}
    \mathcal{L}(\tau)=\sum_{s_t \in \tau \backslash\{s_0\}}\left[\prod_{i=1}^N \int_{a^i \in \mathcal{A}^i} F_i(o^i, a^i) \mathrm{d} a^i - R(s_t) - \prod_{i=1}^N \int_{a_t^i \in \mathcal{A}^i} F_i(o_t^i, a_t^i) \mathrm{d} a_t^i\right]^2.
\end{equation}

Since the inflows and outflows of states in the continuous control task cannot be calculated directly to complete the flow matching condition, we solve the integral with a sampling-based approach. Specifically, we sample $K$ flows independently and uniformly from the continuous action space $\mathcal{A} = \mathcal{A}^1 \times \cdots \times \mathcal{A}^N$ for approximate estimation and match the sampled flows.
Then, we present Lemma~\ref{lm:expectation} proved in \ref{PL5}, which shows that the expectation of sampled inflow and outflow is the true inflow and outflow.

\begin{lemma} 
\label{lm:expectation}
Let $\{a^{1,k},...,a^{N,k}\}_{k=1}^{K}$ be sampled independently and uniformly from the continuous action space $\mathcal{A}^1 \times \cdots \times \mathcal{A}^N$.
Assume $G_{\phi^\star}$ can optimally output the actual state $o_t^i$ with $(o_{t+1}^i,a_t^i)$. Then for any state $s_t \in \mathcal{S}$, we have
\begin{equation}
    \mathbb{E}\left[\frac{\mu(\mathcal{A})}{K} \sum_{k=1}^K \prod_{i=1}^N  F(o_t^{i},a_t^{i,k}) \right]
    = \int_{\boldsymbol{a} \in \mathcal{A}}F(s_t,\boldsymbol{a})\mathrm{d} \boldsymbol{a}
\end{equation}
and
\begin{equation}
    \mathbb{E}\bigg[\frac{\mu(\mathcal{A})}{K} \sum_{k=1}^K \prod_{i=1}^N F(G_{\phi^\star} (o_t^{i}, a_t^{i,k}),a_t^{i,k})\bigg]
    =  \int_{\boldsymbol{a}:T(s,\boldsymbol{a})=s_t}F(s,\boldsymbol{a}) \mathrm{d} \boldsymbol{a}.
\end{equation}
\end{lemma}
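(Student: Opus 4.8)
The plan is to reduce both identities to the elementary Monte Carlo integration fact that, for i.i.d.\ uniform samples $\{x^k\}_{k=1}^K$ on a set $\mathcal{A}$ of finite Lebesgue measure $\mu(\mathcal{A})$ and any integrable $g$,
\begin{equation}
\mathbb{E}\!\left[\frac{\mu(\mathcal{A})}{K}\sum_{k=1}^K g(x^k)\right] = \int_{\mathcal{A}} g(x)\,\mathrm{d}x,
\end{equation}
which follows immediately from linearity of expectation together with the fact that each $x^k$ has uniform density $1/\mu(\mathcal{A})$ on $\mathcal{A}$, so that $\mathbb{E}[g(x^k)] = \frac{1}{\mu(\mathcal{A})}\int_{\mathcal{A}} g\,\mathrm{d}x$ for every $k$. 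The entire argument is then a matter of identifying the correct integrand $g$ and the correct domain in each of the two cases.

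For the outflow identity, I would first note that drawing each $a_t^{i,k}$ independently and uniformly from $\mathcal{A}^i$ is the same as drawing the joint action $\boldsymbol{a}_t^k = (a_t^{1,k},\dots,a_t^{N,k})$ uniformly from the product space $\mathcal{A} = \mathcal{A}^1\times\cdots\times\mathcal{A}^N$, whose measure factorizes as $\mu(\mathcal{A}) = \prod_{i=1}^N \mu(\mathcal{A}^i)$. By Definition~\ref{def-decomposition} the sampled product collapses to a single joint-flow evaluation,
\begin{equation}
\prod_{i=1}^N F(o_t^i, a_t^{i,k}) = F(s_t, \boldsymbol{a}_t^k).
\end{equation}
Applying the Monte Carlo fact with $g(\boldsymbol{a}) = F(s_t,\boldsymbol{a})$ then yields $\int_{\boldsymbol{a}\in\mathcal{A}} F(s_t,\boldsymbol{a})\,\mathrm{d}\boldsymbol{a}$, which is precisely the outflow by Lemma~\ref{lm:inflowandoutflow}.

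For the inflow identity, the only additional ingredient is the change of variables that turns the parent-state integral into an action integral. Under Assumption~\ref{assumption0} the action is a translation, $T(s,a)=s+a$, so each joint action $\boldsymbol{a}$ has a unique parent $s = s_t - \boldsymbol{a}$ with unit Jacobian, and the optimal inverse network recovers it exactly, $G_{\phi^\star}(o_t^i, a_t^{i,k}) = o_t^i - a_t^{i,k}$. Consequently the sampled product equals $\prod_{i=1}^N F(G_{\phi^\star}(o_t^i,a_t^{i,k}), a_t^{i,k}) = F(s_t-\boldsymbol{a}_t^k,\boldsymbol{a}_t^k)$, and the constraint $T(s,\boldsymbol{a})=s_t$ is satisfied automatically for every $\boldsymbol{a}\in\mathcal{A}$, so the nominal integration domain is the entire product space. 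Applying the Monte Carlo fact a second time with $g(\boldsymbol{a}) = F(s_t-\boldsymbol{a},\boldsymbol{a})$ gives $\int_{\boldsymbol{a}:T(s,\boldsymbol{a})=s_t} F(s,\boldsymbol{a})\,\mathrm{d}\boldsymbol{a}$, as claimed.

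I expect the main obstacle to be the inflow case, specifically the careful justification that sampling the action uniformly and composing with the inverse map $G_{\phi^\star}$ is an unbiased estimator of an integral whose domain is implicitly the parent set. This is where Assumption~\ref{assumption0} does the real work: it guarantees both the uniqueness of the parent and the unit-Jacobian change of variables (already carried out in Lemma~\ref{lm:inflowandoutflow}) that identifies the constrained action domain with the full product space, while the optimality hypothesis on $G_{\phi^\star}$ ensures the recovered parent is exact rather than approximate. The outflow case is comparatively routine once the product-to-joint-flow collapse of Definition~\ref{def-decomposition} is invoked.
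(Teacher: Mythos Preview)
Your proposal is correct and follows essentially the same route as the paper: both arguments reduce the two identities to the basic Monte Carlo fact $\mathbb{E}[g(\boldsymbol{a}^k)]=\frac{1}{\mu(\mathcal{A})}\int_{\mathcal{A}}g$, invoke Definition~\ref{def-decomposition} to collapse $\prod_i F(o_t^i,a_t^{i,k})$ into $F(s_t,\boldsymbol{a}_t^k)$, and for the inflow case use the uniqueness of the parent under Assumption~\ref{assumption0} (together with the optimality of $G_{\phi^\star}$) to identify the constrained integral with an integral over all of $\mathcal{A}$. Your write-up is in fact somewhat more explicit than the paper's about the unit-Jacobian change of variables and the identification of the integration domain, but the underlying argument is the same.
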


It is worth noting that, unlike the single-agent task, if we estimate the parent node through network $G_{\phi}$, an accurate $s_{t}$ can be estimated through $s_{t+1}$ and $\boldsymbol{a}_{t+1}$ (i.e., $s_{t} = G_{\phi}(s_{t+1},\boldsymbol{a}_{t+1})$) under the assumption that for any state transition pair $(s_t,s_{t+1})$, there is a unique action $\boldsymbol{a}_t$ such that $T(s_t,\boldsymbol{a}_t) = s_{t+1}$~\citep{cfn}. However, an accurate $o_{t}^i$ cannot be directly estimated through $o_{t+1}^i$ and $a_{t+1}^{i}$, because the parent node is related to all $\{o_{t+1}^i\}_{i=1}^N$, which cannot be obtained only through partial observations, i.e., $o_{t}^i \neq G_{\phi}(o_{t+1}^i, a_{t+1}^i)$. Hence, we analyze the convergence of the proposed sampling algorithm under the condition that the estimated parent~nodes~have~errors. 

In Assumption~\ref{parent-error}, we assume that the estimation error of the parent node will introduce an error to the flow function, which is related to the number of agents $N$ and the number of sampled actions $K$. Obviously, the larger the number of agents, the smaller the proportion of $o_i$ observing the global state, so the larger the error. In addition, the larger the sample size of actions, the smaller the error.

\begin{assumption}\label{parent-error} Let $G_{\phi^\star}$ be a function that optimally outputs the actual state $s_t$ with $(s_{t+1},a_t)$ and $G_{\phi}$ be the neural network that outputs the estimation of $o_t$ based on $(o_{t+1},a_t)$. There exist parameters $\alpha,\beta > 0$ such that the neural network $G_{\phi}$ satisfies

\begin{equation}
    \left|\prod_{i=1}^N F(G_{\phi} (o_{t+1}, a_t^i),a_t^i) -  F(G_{\phi^\star} (s_{t+1}, \boldsymbol{a}_t),\boldsymbol{a}_t) \right | \le \frac{N^\alpha}{K^\beta}.
\end{equation}
\end{assumption}

Based on Assumption~\ref{parent-error}, we present the following Theorem~\ref{thm: tail} proved in \ref{PT2}.

\begin{theorem} \label{thm: tail}
Let $\{a^{1,k},...,a^{N,k}\}_{k=1}^{K}$ be sampled independently and uniformly from the continuous action space $\mathcal{A}^1 \times \cdots \times \mathcal{A}^N$. Assume $G_{\phi^\star}$ can optimally output the actual state $o_t^i$ with $(o_{t+1}^i,a_t^i)$. For any bounded continuous action  $\boldsymbol{a} \in \mathcal{A}$, any state $s_t \in \mathcal{S}$ and any $\delta>0$, we have~\begin{multline} 
    \mathbb{P}\left(\Big{|}\frac{\mu(\mathcal{A})}{K} \sum_{k=1}^K \prod_{i=1}^N  F(o_t^{i},a_t^{i,k}) -\int_{\boldsymbol{a} \in \mathcal{A}}F(s_t,\boldsymbol{a})\mathrm{d} \boldsymbol{a} \Big{|}\right.  \\  \left. \ge \frac{\delta L\mu(\mathcal{A}) \rm{diam}(\mathcal{A}) }{\sqrt{K}} \right)
    \le 2\exp\left(-\frac{\delta^2}{2}\right)
\end{multline}
and 
\begin{multline}
   \mathbb{P}\left( \Big{|}\frac{\mu(\mathcal{A})}{K} \sum_{k=1}^K \prod_{i=1}^N F(G_{\phi} (o_t^{i}, a_t^{i,k}),a_t^{i,k})    - \int_{\boldsymbol{a}:T(s,\boldsymbol{a})=s_t}F(s,a) \mathrm{d} \boldsymbol{a} \Big{|} \right.  \\  \left.
     \ge \frac{\delta L\mu(\mathcal{A}) [\rm{diam}(\mathcal{A}) + \rm{diam}(\mathcal{S})]}{\sqrt{K}} + \frac{\mu(\mathcal{A}) N^\alpha}{K^\beta} \right)\\ 
   \le 2\exp\left(-\frac{\delta^2}{2}\right),
\end{multline}
where $L$ is the Lipschitz constant of the function $F(s_t,a)$, $\rm{diam}(\mathcal{A})$ denotes the diameter of the action space $\mathcal{A}$ and $\mu(\mathcal{A})$ denotes the measure of the action space $\mathcal{A}$.
\end{theorem}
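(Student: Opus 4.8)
The plan is to treat both bounds as instances of a single concentration-of-measure argument for a Monte-Carlo estimator, reducing each to a Hoeffding-type inequality for a sum of i.i.d.\ bounded random variables. For the first (outflow) bound, I would first invoke Definition~\ref{def-decomposition} to identify the product of individual flows with the joint flow, writing $X_k := \prod_{i=1}^N F(o_t^i, a_t^{i,k}) = F(s_t, \boldsymbol{a}^k)$, where $\boldsymbol{a}^k = (a^{1,k},\ldots,a^{N,k})$ are i.i.d.\ draws from the uniform distribution on $\mathcal{A}$. Lemma~\ref{lm:expectation} already supplies the unbiasedness identity $\mathbb{E}[\tfrac{\mu(\mathcal{A})}{K}\sum_k X_k] = \int_{\mathcal{A}} F(s_t,\boldsymbol{a})\,\mathrm{d}\boldsymbol{a}$, so the problem collapses to concentrating $\tfrac{\mu(\mathcal{A})}{K}\sum_k X_k$ about its mean.

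The second ingredient is to bound the range of each summand. Since $F(s_t,\cdot)$ is $L$-Lipschitz and $\mathcal{A}$ has finite diameter, the oscillation of $X_k$ relative to any reference action is at most $L\,\mathrm{diam}(\mathcal{A})$, so each $X_k$ lies in an interval of width at most $2L\,\mathrm{diam}(\mathcal{A})$. Applying Hoeffding's inequality (equivalently, McDiarmid's bounded-differences inequality with per-sample difference $\tfrac{\mu(\mathcal{A})}{K}\cdot 2L\,\mathrm{diam}(\mathcal{A})$) at deviation level $t = \tfrac{\delta L\mu(\mathcal{A})\mathrm{diam}(\mathcal{A})}{\sqrt{K}}$ then yields exactly $2\exp(-\delta^2/2)$, establishing the first inequality.

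For the inflow bound I would introduce the idealized estimator $\hat{J}^\star = \tfrac{\mu(\mathcal{A})}{K}\sum_k \prod_i F(G_{\phi^\star}(o_t^i, a_t^{i,k}), a_t^{i,k})$ built from the optimal inverse transition $G_{\phi^\star}$, and split the total error by the triangle inequality into a deterministic bias term $|\hat{J} - \hat{J}^\star|$ and a stochastic term $|\hat{J}^\star - \int_{\boldsymbol{a}:T(s,\boldsymbol{a})=s_t} F(s,\boldsymbol{a})\,\mathrm{d}\boldsymbol{a}|$. The bias term is controlled summand-by-summand by Assumption~\ref{parent-error}, giving $|\hat{J}-\hat{J}^\star| \le \tfrac{\mu(\mathcal{A})}{K}\cdot K\cdot \tfrac{N^\alpha}{K^\beta} = \tfrac{\mu(\mathcal{A})N^\alpha}{K^\beta}$, which is precisely the additive correction in the statement. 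The stochastic term is handled as before, using Lemma~\ref{lm:expectation} for unbiasedness of $\hat{J}^\star$; the only change is that each summand, viewed via Definition~\ref{def-decomposition} as $F(G_{\phi^\star}(s_t,\boldsymbol{a}^k),\boldsymbol{a}^k)$, now depends on $\boldsymbol{a}^k$ through both the action argument and the reconstructed parent-state argument, so the Lipschitz bound over $\mathcal{S}\times\mathcal{A}$ produces oscillation $L[\mathrm{diam}(\mathcal{A}) + \mathrm{diam}(\mathcal{S})]$ and hence the factor $[\mathrm{diam}(\mathcal{A}) + \mathrm{diam}(\mathcal{S})]$. Finally I would observe that the event in the statement is contained in $\{|\hat{J}^\star - \int F| \ge \tfrac{\delta L\mu(\mathcal{A})[\mathrm{diam}(\mathcal{A})+\mathrm{diam}(\mathcal{S})]}{\sqrt{K}}\}$, so the Hoeffding bound transfers verbatim.

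I expect the main obstacle to be pinning down the effective oscillation of the product summands rather than the concentration step itself: one must carefully identify the $N$-fold product with the joint flow so that the single Lipschitz constant $L$ applies, and, for the inflow, track that the sampled action perturbs \emph{both} slots of $F(\cdot,\cdot)$ to justify the $\mathrm{diam}(\mathcal{S})$ contribution. Care is also needed in combining the deterministic Assumption~\ref{parent-error} bias with the random fluctuation so that the two error sources add exactly in the stated form.
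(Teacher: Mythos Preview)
Your proposal is correct and follows essentially the same route as the paper: define the centered per-sample terms via the identification $\prod_i F(o_t^i,a_t^{i,k})=F(s_t,\boldsymbol{a}^k)$, bound their range by the Lipschitz constant times $\mathrm{diam}(\mathcal{A})$ (respectively $\mathrm{diam}(\mathcal{A})+\mathrm{diam}(\mathcal{S})$ for the inflow, where the sampled action perturbs both arguments of $F$), apply Hoeffding's inequality at the indicated deviation level, and for the second bound split off the deterministic bias $\mu(\mathcal{A})N^\alpha/K^\beta$ via Assumption~\ref{parent-error} and the triangle inequality before invoking the concentration bound on the $G_{\phi^\star}$-based estimator. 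The paper carries this out by explicitly writing $\Gamma_k$ and $\Lambda_k$ as integrals of flow differences and bounding them pointwise, but the mechanics and constants match yours exactly.
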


\begin{remark}
    Theorem~\ref{thm: tail} establishes the error bound between the sampled flows and the actual flows based on multi-agent flow sampling and parent node approximation, extending Theorem 2 in \citet{cfn} to the multi-agent case.
    The error decreases in the tail form as $K$ increases, which indicates that the sampling flows serve as an accurate way to estimate the actual flows. 
    As long as the number of samples $K$ is sufficient, there will be a minor error between our sampled flow and the actual flow with a high probability, ensuring that our algorithm can converge well in practice.
\end{remark}

\subsection{MACFN: Training Framework}
As shown in Figure~\ref{framework}, our framework adopts the Centralized Training with Decentralized Execution~(CTDE) paradigm. Each agent learns its individual-flow-based policy by optimizing the global-flow-based matching loss of the continuous flow decomposition network. During the execution, the continuous flow decomposition network is removed, and each agent acts according to its local policy derived from its flow function. Our method can be described by three steps: 1) flow sampling; 2) flow decomposition; 3) flow matching.

\textbf{Flow Sampling.}
Since the action space $\mathcal{A} = \mathcal{A}^1 \times \cdots \times \mathcal{A}^N$ are continuous and independent, we use a Monte Carlo integration approach and sample actions from $\mathcal{A}^i$ for each agent $i$ uniformly and independently. The individual flow function $F$ is parameterized by $\theta$. Then we calculate the corresponding edge flow $\{F_\theta(o^i,a^{i})\}$ for flow estimation to approximate the flow distributions. Then, we normalized the flow distributions by the softmax function to obtain action probability distributions. Each agent selects an action according to the probability distributions, forming the joint action~$\boldsymbol{a}$. Naturally, actions with higher flows are more likely to be sampled. We repeat this sampling process until the agents reach the final state, obtaining the complete trajectories. In this way, we approximately sample actions based on their corresponding probabilities.

\begin{figure*}[!t]
    \centering
    \setlength{\abovecaptionskip}{-0.0cm} 
  \setlength{\belowcaptionskip}{-0.0cm} 
    \includegraphics[width=1.0\textwidth]{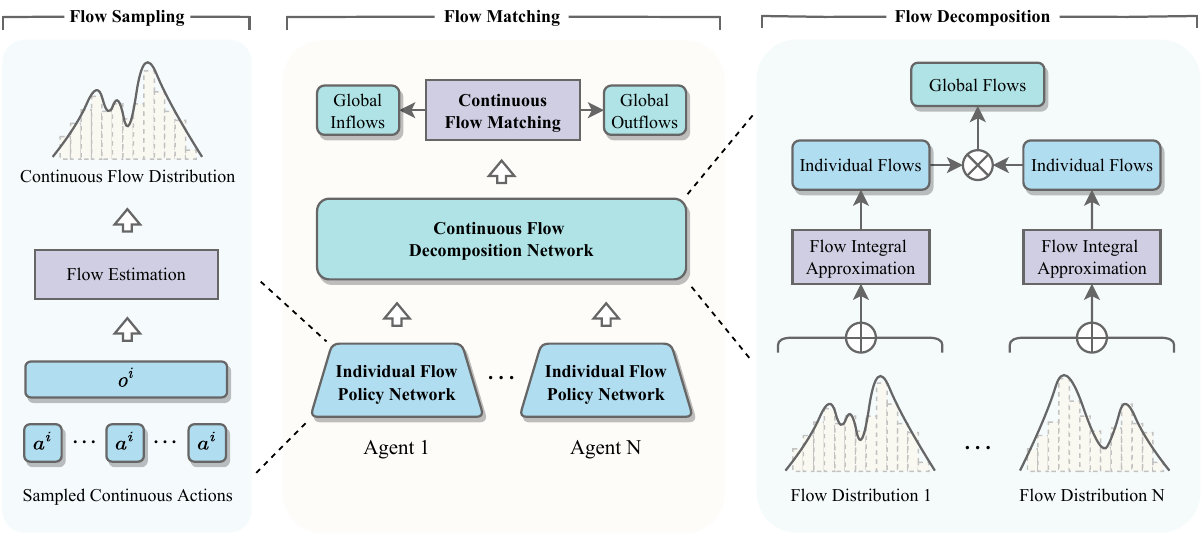}
    \caption{The overall framework of MACFN. \textbf{Left:}
    We uniformly sample continuous actions for each agent and conduct flow estimation to obtain the individual flow distribution. Each agent can select actions according to its own flow distribution. \textbf{Right:} To realize flow decomposition,
    we first summarize the sampled flows of each agent to approximate the integral of individual flow distribution. Then we multiply the flow of each agent to obtain the global inflows~(outflows). \textbf{Middle:} We update the overall flow networks by the continuous flow matching loss, i.e., inflows equal to outflows.}
    \label{framework}
  \end{figure*}

\textbf{Flow Decomposition.}
Then, we introduce the flow decomposition network to deduce the flow contributions of each agent.
Specifically, the flow decomposition network enables individual flow networks to learn by backpropagating gradients in the presence of only global rewards rather than from any agent-specific rewards.
To calculate the inflows and outflows of the sampled states, we independently and uniformly sample~$\hat{K}$ actions $\{a^{i,\hat{k}}\}_{\hat{k}=1}^{\hat{K}}$ from $\mathcal{A}^i$ for each agent $i$. Since the individual flow networks of agents are independent and we sample action independently, we have $\hat{K}=\log_N K$, where $N$ is the number of agents.
Then we calculate the individual flow~$F_i(o_t^i,a^{i,\hat{k}}), \hat{k} = 1,...,\hat{K}$. We sum the sampled $\hat{K}$ flows for each agent $i$ to obtain an approximation of the integral of the outflows (resp. inflows) of each agent. Then, we multiply the corresponding flows of~$N$ agents to obtain an approximation of the outflows (resp. inflows) of the sampled states.
The inverse transition network $G_\phi$ is used to find the parent sets of agents when calculating the inflows.

\textbf{Flow Matching.}
In order to update the individual flow network of each agent, we calculate the continuous flow-matching loss~\citep{bengio2021flow} based on the approximation of global state outflows (resp. inflows), i.e., for any states $s_t$ except for initial, the flow incoming to $s_t$ must equal to the outgoing flow. For sparse reward environments, $R(s_t)=0$ if $s_t \neq s_f$.

Since the magnitude of the flows at the root nodes and leaf nodes in the trajectories may be inconsistent, we adopt log-scale operation~\citep{bengio2021flow} to obtain the following loss function:

\begin{equation}\label{approximate-loss-log}
    \tilde{\mathcal{L}}(\tau, \epsilon ; \theta)=\sum_{s_t \in \tau \neq s_0}[\log (\epsilon+\text{Inflows})-\log (\epsilon+\text{Outflows})]^2,
\end{equation}

\noindent where
\begin{align}
\text{Inflows}\label{inflow-loss} & := \prod_{i=1}^N\left(\sum_{\hat{k}=1}^{\hat{K}} \exp F_{\theta_i}^{\log}(G_{\phi} (o_t^{i}, a_t^{i,\hat{k}}),a_t^{i,\hat{k}})\right), \\
\text{Outflows}\label{outflow-loss} & :=R\left(s_t\right)+\prod_{i=1}^N\left(\sum_{\hat{k}=1}^{\hat{K}} \exp F_{\theta_i}^{\log}(o_t^i,a_t^{i,\hat{k}})\right).
\end{align}

To this end, each agent updates its own flow policy by backward propagation gradients through the flow decomposition network, enabling learning in a centralized manner and executing in a decentralized way. In this way, each agent can solely select actions based on individual flow policy, where the joint flow is proportional to the given reward function when interacting with the environment. To make the proposed method clearer for readers, we provide the pseudocode of MACFN in Algorithm~\ref{pseudocode}.

\begin{algorithm}[!t]\footnotesize
	\caption{MACFN}
        \textbf{Initialize:} individual flow network $F_{\theta}$; inverse transition network $G_\phi$; replay buffer $\mathcal{D}$; sampled flow number $\hat{K}$
	\begin{algorithmic}[1]\label{pseudocode}
	\REPEAT
        \STATE Sample an episode according to flow distribution
        \STATE Store the episode $\{(s_t,\boldsymbol{a}_t,r_t,s_{t+1})\}_{t = 0}^{T}$ to $\mathcal{D}$ 
        \STATE Sample a random batch of episodes from $\mathcal{D}$ 
        \STATE Train inverse transition network $G_\phi$
	% initialize the local personalized model $local^k_0$
	\FOR{each agent $i$}
        \STATE Uniformly sample $\hat{K}$ actions $\{a^{i,\hat{k}} \in \mathcal{A}^i\}_{\hat{k} = 1}^{\hat{K}}$ 
        \STATE Compute parent nodes $\{G_{\phi}(o_i,a^{i,\hat{k}})\}_{\hat{k}=1}^{\hat{K}}$ 
        \ENDFOR
        \STATE Compute inflows according to Eq.~(\ref{inflow-loss}):
        \STATE \quad\;$\prod_{i=1}^N\left(\sum_{\hat{k}=1}^{\hat{K}} \exp F_{\theta_i}^{\log}(G_{\phi^\star} (o_t^{i}, a_t^{i,{\hat{k}}}),a_t^{i,{\hat{k}}})\right)$
        \STATE Compute outflows according to Eq.~(\ref{outflow-loss}):
        \STATE \quad\;$R\left(s_t\right)+\prod_{i=1}^N\left(\sum_{\hat{k}=1}^{\hat{K}} \exp F_{\theta_i}^{\log}(o_t^i,a_t^{i,{\hat{k}}})\right)$
        \STATE Update flow network $F_{\theta}$ according to Eq.~(\ref{approximate-loss-log})
	\UNTIL converge
	\end{algorithmic}
	\end{algorithm}

\section{Experiments}\label{sec:experiments}
To demonstrate the effectiveness of the proposed MACFN, we conduct experiments on several multi-agent cooperative environments with sparse rewards, including the Multi-Agent Particle Environment~(MPE)~\citep{MADDPG,mordatch2018emergence} and Multi-Agent MuJoCo~(MAMuJoCo) \citep{peng2021facmac}. In this section, we first provide the details for the environments. Then the compared methods and parameter settings are introduced. The comparison results of different baselines are reported to evaluate the performance of MACFN.

\subsection{Environments}

First, we evaluate the proposed MACFN on various MPE scenarios with sparse reward, including \emph{Robot-Navigation-Sparse}, \emph{Food-Collection-Sparse}, and \emph{Predator-Prey-Sparse}. The visualization of these scenarios is shown in Figure~\ref{vis_env}. We consider the cooperative setting where a group of agents collaboratively works towards achieving a common goal, and only sparse rewards~are~provided.

\begin{figure}[!t]
	\centering
	\includegraphics[width=1.0\textwidth]{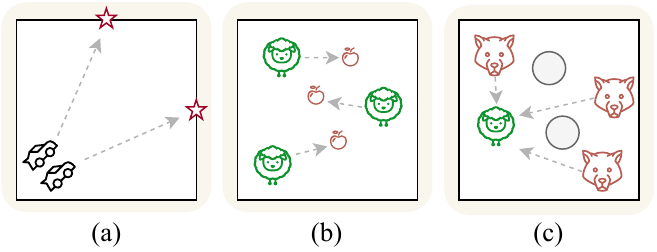}
 	\caption{Visualization of different MPE scenarios, including (a)~Robot-Navigation-Sparse, (b)~Food-Collection-Sparse, and (c)~Predator-Prey-Sparse.} 
	\label{vis_env}
\end{figure}

\emph{Robot-Navigation-Sparse} is a continuous navigation task with a maximum episode length of $12$. This scenario consists of $N$ agents, placed at the starting point of the environment. At each episode, the agents must navigate to different $N$ destinations and obtain rewards based on the minimum distance between the destinations and the agents.
In \emph{Food-Collection-Sparse}, $N$ agents need to occupy $N$ food locations cooperatively. Within the partially observed horizon, agents can observe the relative positions of other agents and food locations. The more food agents occupy, the greater the reward they receive, so agents need to infer the landmarks they must cover to avoid multiple agents grabbing a food location together.  The reward is calculated based on how far any agent is from each food.
\emph{Predator-Prey-Sparse} consists of $N$ cooperating predator agents, $2$ large landmarks, and $1$ prey around a randomly generated position. Predator agents move toward the prey while avoiding the landmark impeding the way. When the episode ends, the predator agents are rewarded based on their distance from the target prey.
The maximum episode length for both \emph{Food-Collection-Sparse} and \emph{Predator-Prey-Sparse} is $25$.

To further verify the effectiveness of the proposed MACFN, we compare our method with different baselines on a more challenging benchmark, named MAMuJoCo~\citep{peng2021facmac}. MAMuJoCo is a benchmark based on OpenAI's Mujoco Gym environments for continuous multi-agent robotic control, where multiple agents within a single robot have to solve a task cooperatively. We conduct experiments on several MAMuJoCo scenarios with sparse rewards, including \emph{2-Agent-Reacher-Sparse}, \emph{2-Agent-Swimmer-Sparse}, and \emph{3-Agent-Hopper-Sparse}. The maximum episode length for these scenarios is set to $50$.

In \emph{2-Agent-Reacher-Sparse}, Reacher is a two-jointed robotic arm, each joint controlled by an individual agent. The goal of the agents is to move the robot’s fingertip toward a random target. At the end of each episode, the reward is obtained by how far the fingertip of the Reacher is from the target, with greater rewards granted for shorter distances.
In \emph{2-Agent-Swimmer-Sparse}, Swimmer is a three-segment entity with two articulation rotors, each rotor controlled by an individual agent. The agents aim to propel it rightward quickly by applying torque to the rotors.
In \emph{3-Agent-Hopper-Sparse}, Hopper consists of four body parts connected by three hinges, each hinge controlled by an individual agent. The objective here is to achieve rightward hops by manipulating the hinges' torques.
For \emph{2-Agent-Swimmer-Sparse} and \emph{3-Agent-Hopper-Sparse}, the reward increases with the distance traveled to the right from the start point.

\subsection{Settings}

We compare MACFN with various baseline methods such as Independent DDPG~(IDDPG)~\citep{DDPG}, MADDPG~\citep{MADDPG}, COVDN~\citep{VDN}, COMIX~\citep{QMIX}, and FACMAC~\citep{peng2021facmac},  on these environments and show consistently large superior to the state-of-the-art counterparts, as well as better exploration capability. 
We run all experiments using the Python MARL framework (PyMARL)~\citep{SMAC,peng2021facmac}.
The hyperparameters of all methods are consistent, following the original settings to ensure comparability.
For all experiments, the optimization is conducted using Adam with a learning rate of $3 \times 10^{-4}$. A total of $1$M timesteps is used in the MPE scenarios, while a total of $2$M timesteps is used in the MAMuJoCo scenarios. All experimental results are illustrated with the mean and the standard deviation of the performance over five random seeds for a fair comparison. We adopt the average test return and the number of distinctive trajectories as performance metrics. 
% The proposed MACFN samples actions with the maximum flow output for the average test return comparison, while samples actions with the flow distribution for distinctive trajectories comparison.
For comparing the average test return, the proposed MACFN samples actions based on the maximum flow output. For the comparison of distinctive trajectories, it samples actions according to the flow distribution.
% After a specified number of training epochs, we collect $10,000$ trajectories.
During the training process, we collect a total of $10,000$ trajectories.
The number of distinctive trajectories is calculated based on the distance between different generated valid trajectories, where we discard the particularly similar one using a threshold.

\subsection{Results}

The average test return of different MPE scenarios is shown in Figure~\ref{fig_reward}(a)-(c). For comparison, MACFN greedily selects the action with the largest edge flow. Our proposed MACFN demonstrates encouraging results in both scenarios.
In the easy \emph{Robot-Navigation-Sparse} scenario, two agents navigate from a fixed starting point to two fixed destinations, requiring relatively little collaborative exploration. Thus, several baselines, including MADDPG and COVDN, can also achieve promising results only considering the highest rewarding trajectories, while the exploratory benefit brought by MACFN is not obvious. 
However, in the more challenging scenarios, i.e., \emph{Food-Collection-Sparse} and \emph{Predator-Prey-Sparse} environments, our approach consistently outperforms baselines by a large margin as the difficulty increases.
In addition, it is noticeable that, compared with the baselines, the performance of MACFN is more stable across training. All environments require the agents to collaborate and explore to achieve their goals, especially in the sparse reward setting. 
While the baselines are laborious to explore in challenging \emph{Food-Collection-Sparse} and \emph{Predator-Prey-Sparse} environments, our approach can efficiently search for valuable trajectories with stable updates. It implies that MACFN successfully learns flow decomposition to deduce different flow contributions and coordinate different agents, so the agents can discover diverse cooperation patterns to achieve their goals.

\begin{figure*}[!t]
	\centering
 \subfloat{\quad\;\includegraphics[width=1.0\textwidth]{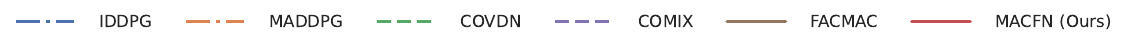}}
 \vspace{-0.5em}
 \\    
    \addtocounter{subfigure}{-1}
% 	\hspace{-0.3cm}
    \subfloat[][Robot-Navigation-Sparse]{
	\includegraphics[width=0.33\textwidth]{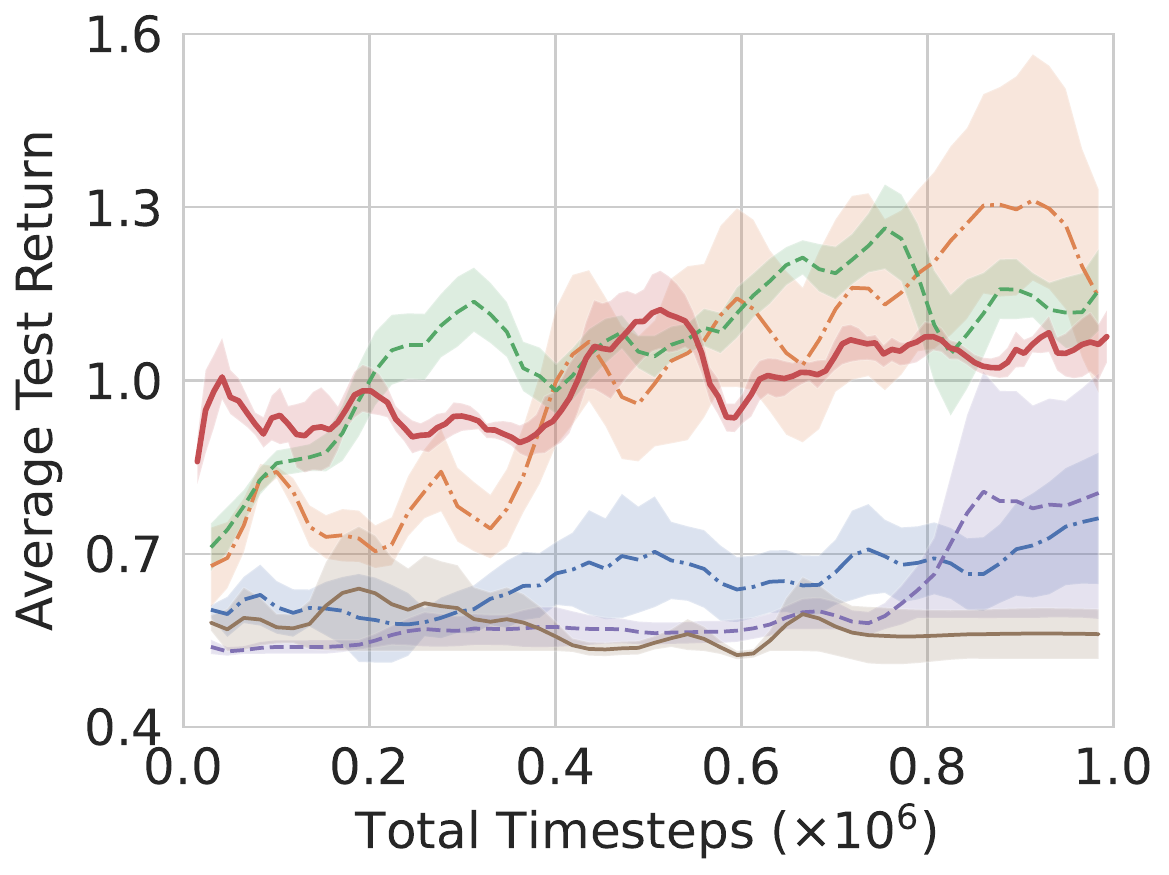}
	}
	\subfloat[][Food-Collection-Sparse]{
	\includegraphics[width=0.33\textwidth]{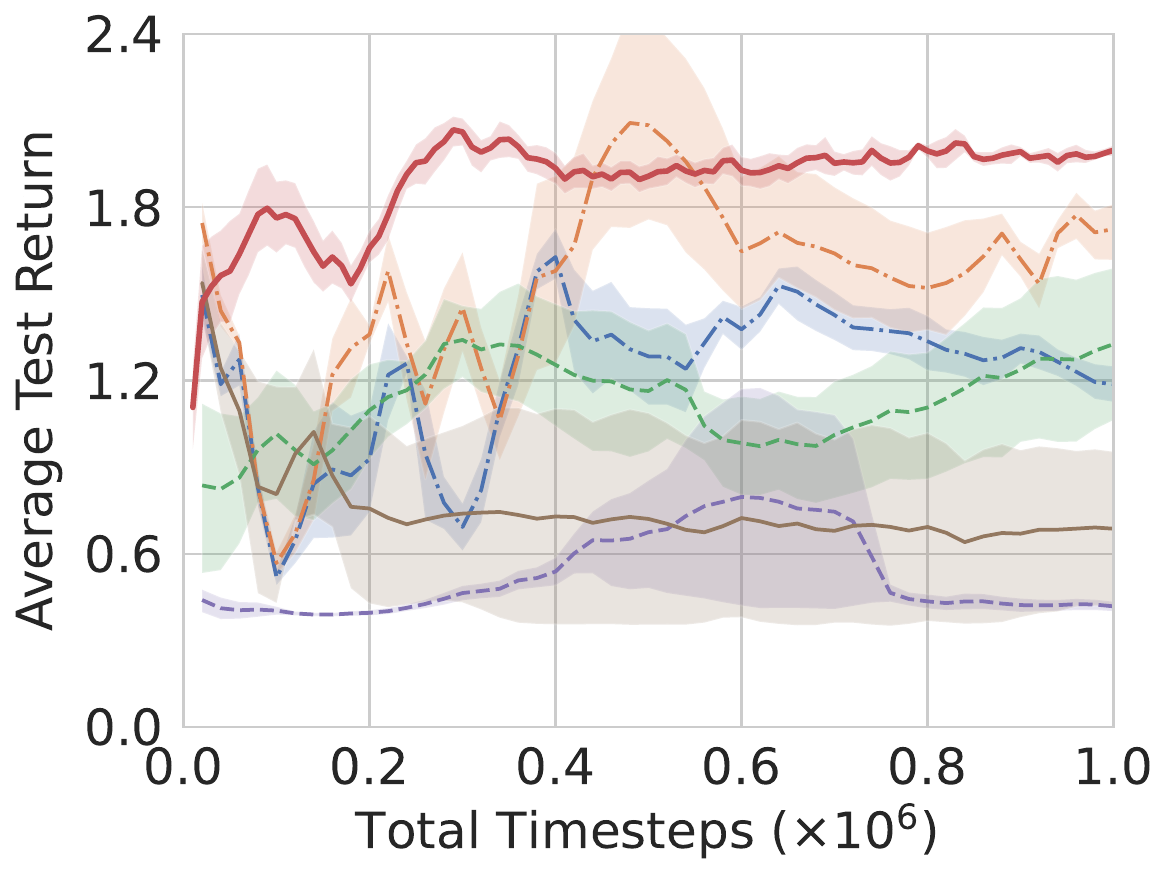}
	}
	\subfloat[][Predator-Prey-Sparse]{
	\includegraphics[width=0.33\textwidth]{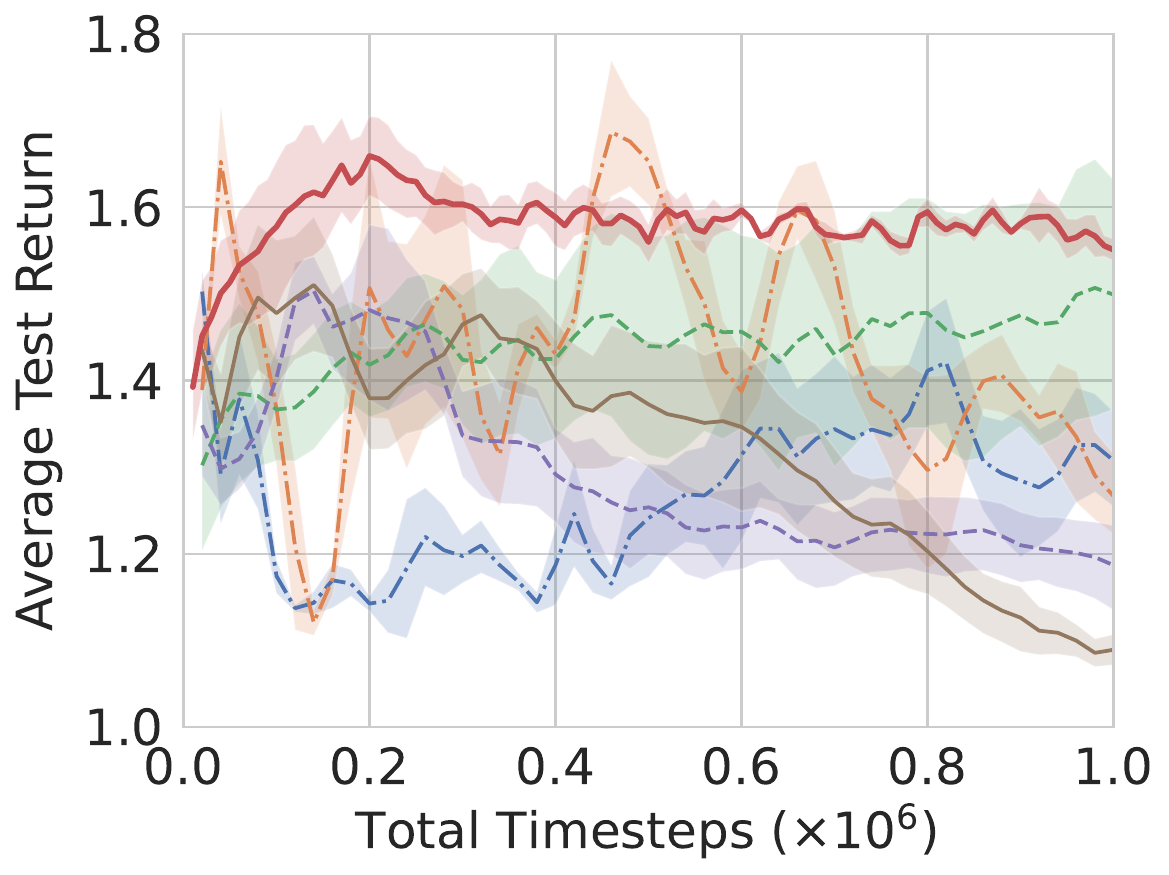}
	}
	
	\subfloat[][Robot-Navigation-Sparse]{
	\includegraphics[width=0.33\textwidth]{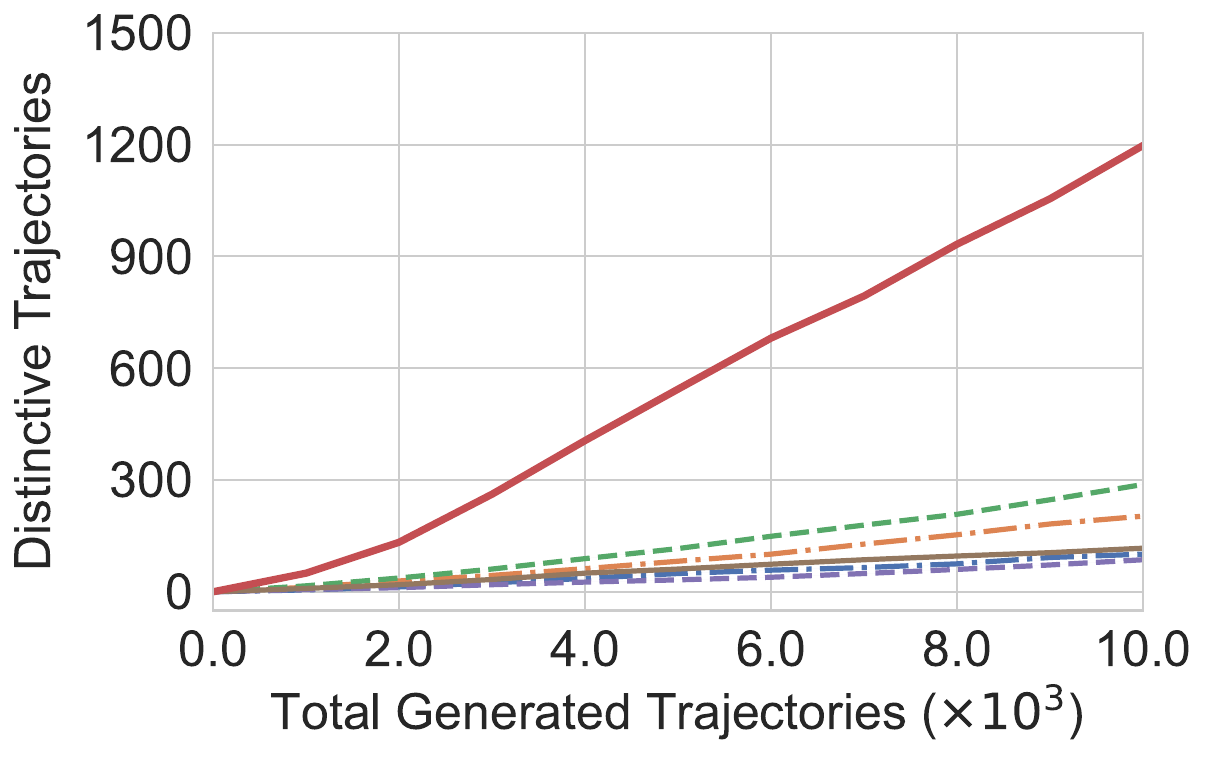}
	}
	\subfloat[][Food-Collection-Sparse]{
	\includegraphics[width=0.33\textwidth]{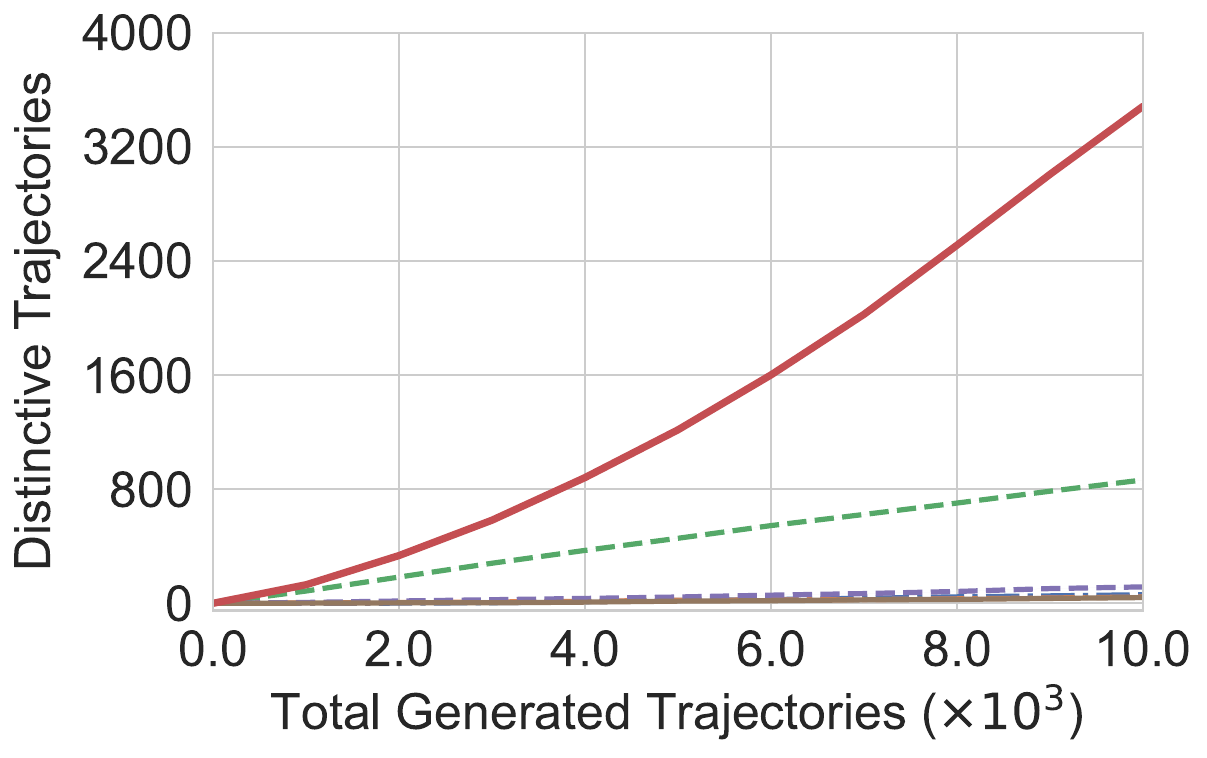}
	}
	\subfloat[][Predator-Prey-Sparse]{
	\includegraphics[width=0.33\textwidth]{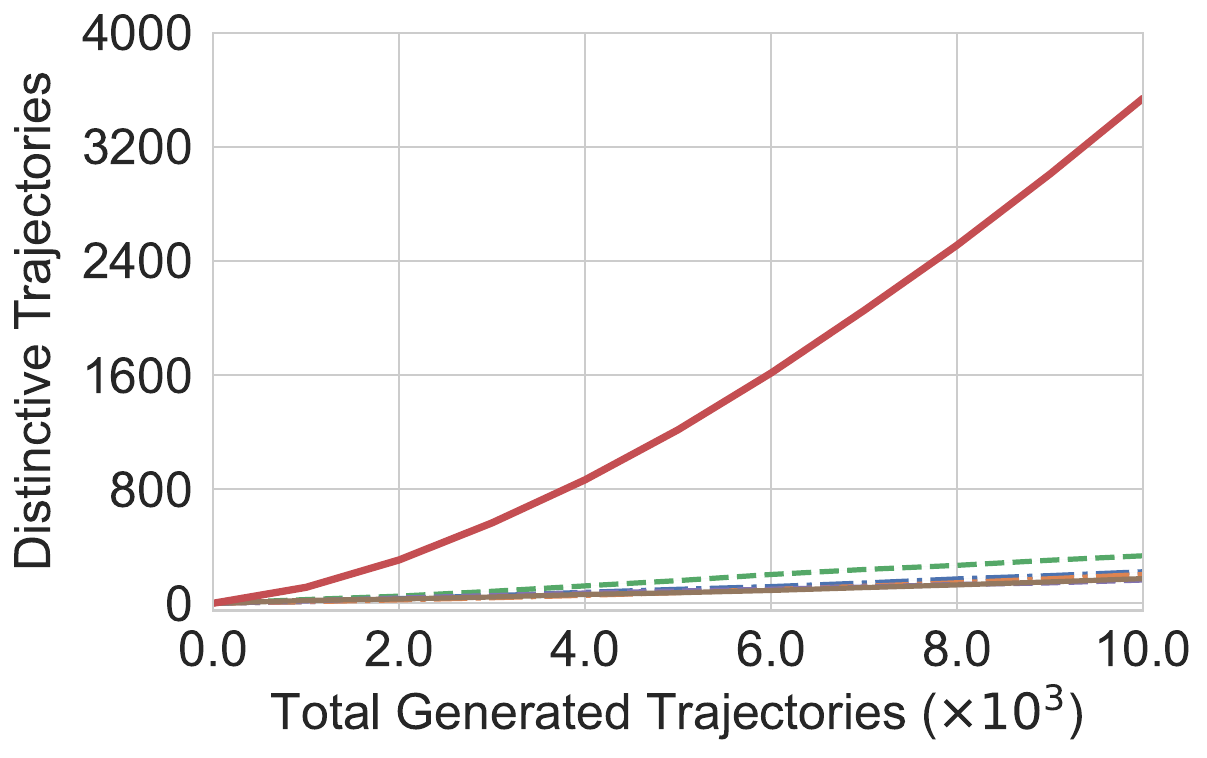}
	}
 	\caption{Comparison results of IDDPG, MADDPG, COVDN, COMIX, FACMAC and MACFN on Robot-Navigation-Sparse~($N=2$), Food-Collection-Sparse~($N=3$), and Predator-Prey-Sparse~($N=3$) scenarios. \textbf{Top:} Average test return of different methods.   \textbf{Bottom:} Number of distinctive trajectories during the training process.} 
	\label{fig_reward}
\end{figure*}

Figure~\ref{fig_reward}(d)-(f) shows the number of distinctive trajectories explored in different MPE scenarios. Compared with the baselines, where the learned policies are more inclined to sample action sequences with the highest rewards, MACFN exceeds the other methods significantly in terms of exploratory capacity. The number of valid trajectories explored by MACFN tends to increase as the training progresses. In contrast, none of the other algorithms could explore diverse trajectories effectively.

\begin{table}[!t]
    \caption{Average test return of our method and baselines on Food-Collection-Sparse environment under different agent number settings.  $\pm$ indicates one standard deviation of the average evaluation over $5$ trials. \textbf{Bold} indicates the best performance~in~each~scenario. }
    \centering
    \label{tab:reuslt1}
    \resizebox{0.66\textwidth}{!}{%
    \begin{tabular}{@{}cccc@{}}
    \toprule
    \multirow{2}{*}{\textbf{Method}} & \multicolumn{3}{c}{\textbf{Food-Collection-Sparse}} \\ \specialrule{0em}{1pt}{1pt} \cmidrule(l){2-4} 
                                     & \textbf{N=3}    & \textbf{N=4}    & \textbf{N=5}    \\ \specialrule{0em}{1pt}{1pt} \midrule
    IDDPG                & 1.19 $\pm$ 0.20  & 1.43 $\pm$ 0.14  & 1.90 $\pm$ 1.04 \\ \specialrule{0em}{1pt}{1pt}
    MADDPG             & 1.72 $\pm$ 0.29  & 2.11 $\pm$ 0.45  & 2.26 $\pm$ 0.81 \\ \specialrule{0em}{1pt}{1pt}
    COVDN                  & 1.33 $\pm$ 0.82  & 1.39 $\pm$ 0.71  & 1.57 $\pm$ 0.77 \\ \specialrule{0em}{1pt}{1pt}
    COMIX                 & 0.42 $\pm$ 0.05  & 0.92 $\pm$ 0.43  & 0.65 $\pm$ 0.17 \\ \specialrule{0em}{1pt}{1pt}
    FACMAC        & 0.69 $\pm$ 0.59  & 0.44 $\pm$ 0.11  & 0.37 $\pm$ 0.03 \\ \specialrule{0em}{1pt}{1pt} \midrule
    \textbf{MACFN (Ours)}           & \textbf{2.00 $\pm$ 0.05}  & \textbf{2.19 $\pm$ 0.12}  & \textbf{2.35 $\pm$ 0.13}    \\ \specialrule{0em}{1pt}{1pt} \bottomrule
    \end{tabular}%
    }
    \end{table}
    
    \begin{table}[!t]
    \caption{Average test return of our method and baselines on Predator-Prey-Sparse environment under different agent number settings.}
    \label{tab:reuslt2}
    \centering
    \resizebox{0.66\textwidth}{!}{%
    \begin{tabular}{@{}cccc@{}}
    \toprule
    \multirow{2}{*}{\textbf{Method}} & \multicolumn{3}{c}{\textbf{Predator-Prey-Sparse}} \\ \specialrule{0em}{1pt}{1pt} \cmidrule(l){2-4} 
                                     & \textbf{N=3}    & \textbf{N=4}    & \textbf{N=5}    \\ \specialrule{0em}{1pt}{1pt} \midrule
    IDDPG               & 1.31 $\pm$ 0.18  & 0.89 $\pm$ 0.05  & 0.73 $\pm$ 0.02           \\ \specialrule{0em}{1pt}{1pt}
    MADDPG            & 1.27 $\pm$ 0.14  & 1.21 $\pm$ 0.25  & 0.98 $\pm$ 0.19          \\ \specialrule{0em}{1pt}{1pt}
    COVDN                & 1.50 $\pm$ 0.33  & 1.12 $\pm$ 0.25  & 0.81 $\pm$ 0.09            \\ \specialrule{0em}{1pt}{1pt}
    COMIX               & 1.19 $\pm$ 0.14  & 0.87 $\pm$ 0.02  & 0.75 $\pm$ 0.05            \\ \specialrule{0em}{1pt}{1pt}
    FACMAC     & 1.09 $\pm$ 0.05  & 0.80 $\pm$ 0.01  & 0.71 $\pm$ 0.03               \\ \specialrule{0em}{1pt}{1pt} \midrule
    \textbf{MACFN (Ours)}             & \textbf{1.55 $\pm$ 0.03}  & \textbf{1.31 $\pm$ 0.08}  & \textbf{1.02 $\pm$ 0.01}      \\ \specialrule{0em}{1pt}{1pt} \bottomrule
    \end{tabular}%
    }
    \end{table}

To further test the generalizability of the proposed MAFCN, we conduct experiments with varying numbers of agents. As the number of agents increases, the interaction between agents and the appropriate flow decomposition often becomes more complicated. 
Despite this challenge, our approach consistently outperforms the baselines with the agent number increases, as shown in Table~\ref{tab:reuslt1} and~\ref{tab:reuslt2}.
Overall, these extensive results clearly demonstrate the advantage of multi-agent flow matching, leading to significantly superior performance compared to the state-of-the-art baselines.

\begin{figure*}[!t]
	\centering
 \subfloat{\quad\;\includegraphics[width=1.0\textwidth]{legend.pdf}}
 \vspace{-0.5em}
 \\    
    \addtocounter{subfigure}{-1}
% 	\hspace{-0.3cm}
    \subfloat[][2-Agent-Reacher-Sparse]{
	\includegraphics[width=0.33\textwidth]{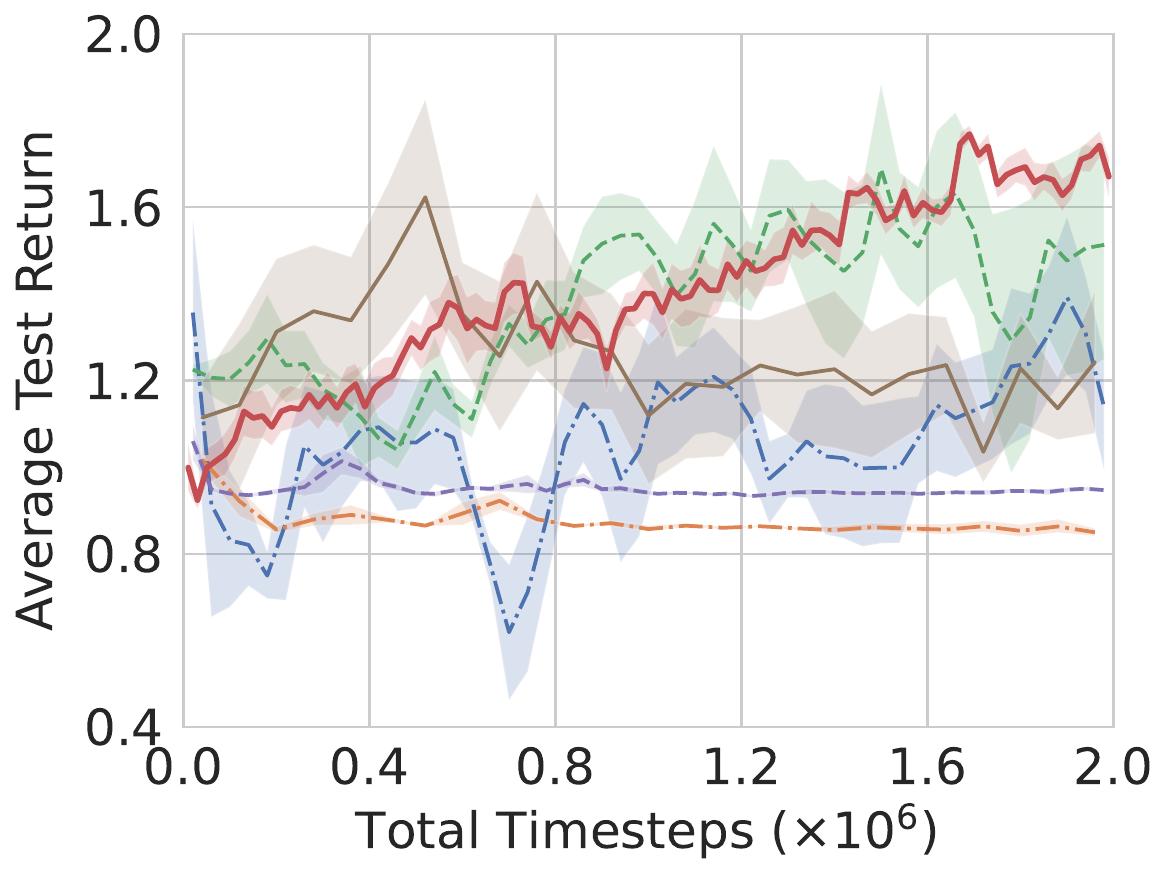}
	}
	\subfloat[][2-Agent-Swimmer-Sparse]{
	\includegraphics[width=0.33\textwidth]{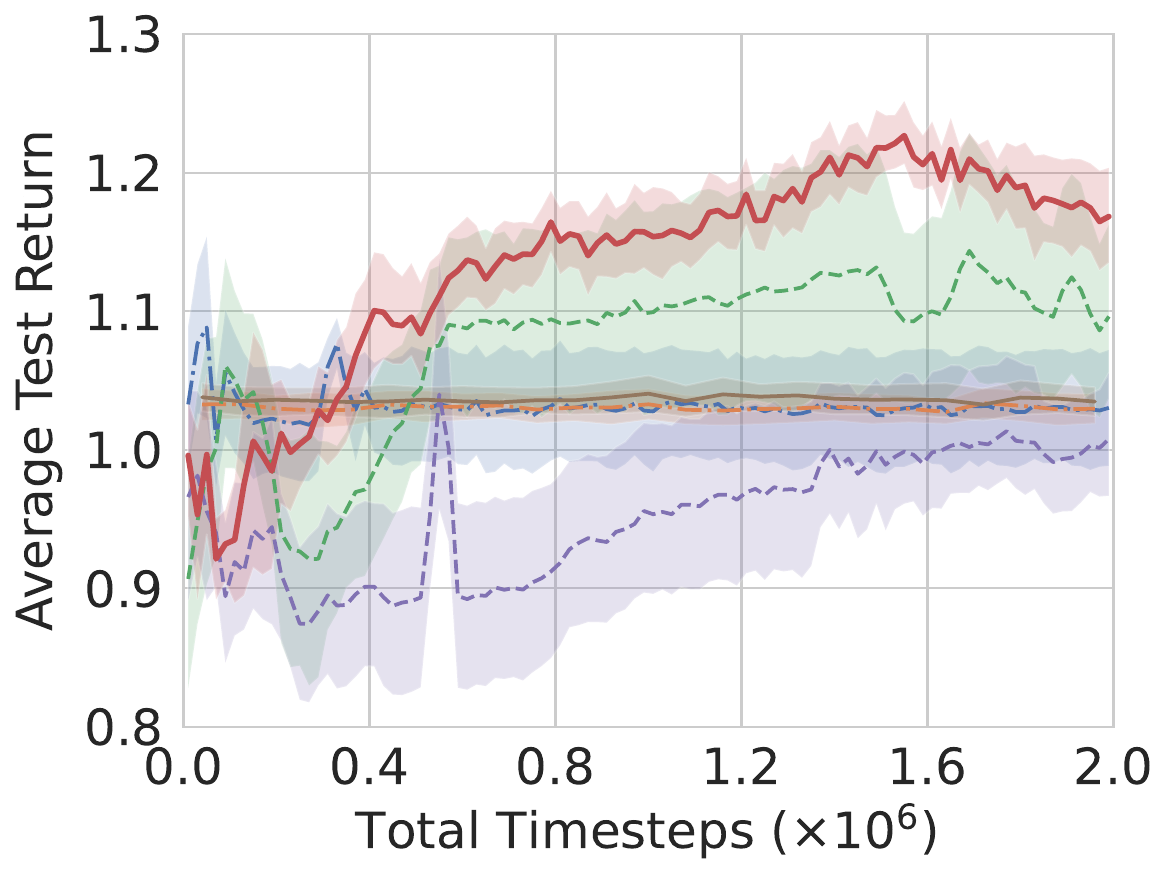}
	}
	\subfloat[][3-Agent-Hopper-Sparse]{
	\includegraphics[width=0.33\textwidth]{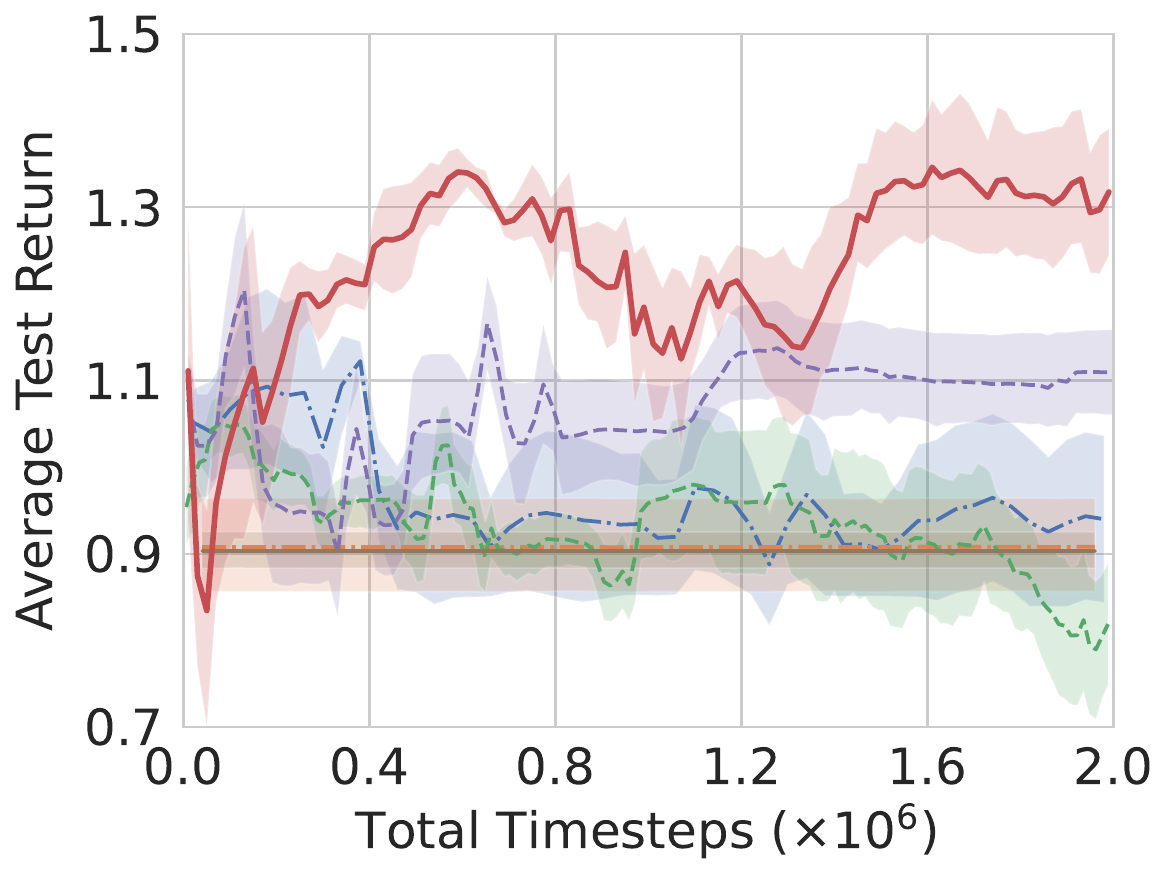}
	}
	
	\subfloat[][2-Agent-Reacher-Sparse]{
	\includegraphics[width=0.33\textwidth]{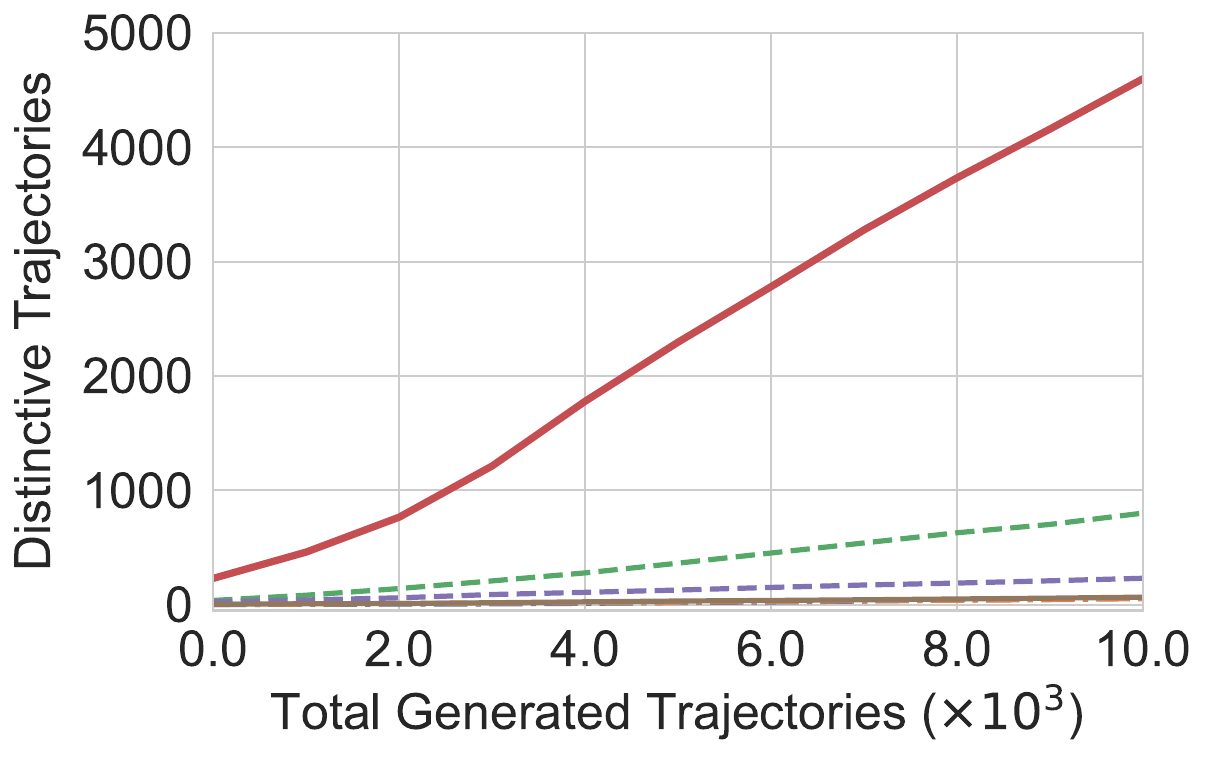}
	}
	\subfloat[][2-Agent-Swimmer-Sparse]{
	\includegraphics[width=0.33\textwidth]{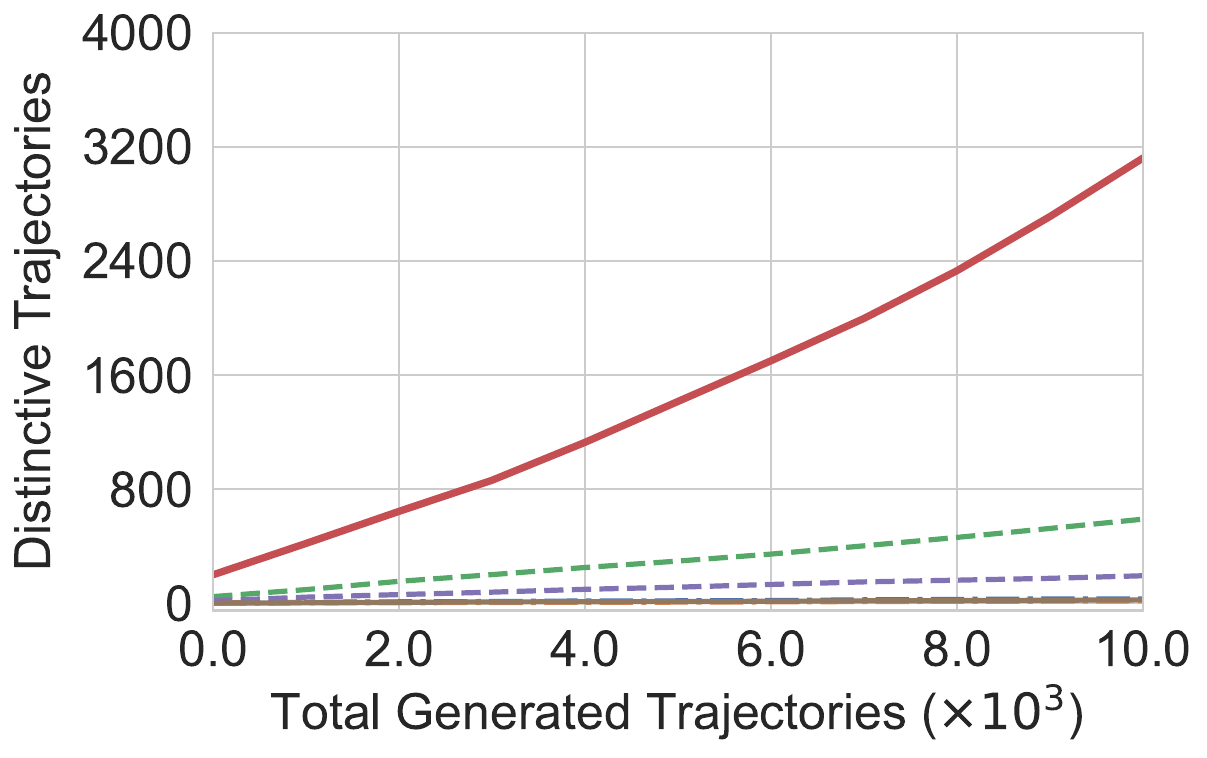}
	}
	\subfloat[][3-Agent-Hopper-Sparse]{
	\includegraphics[width=0.33\textwidth]{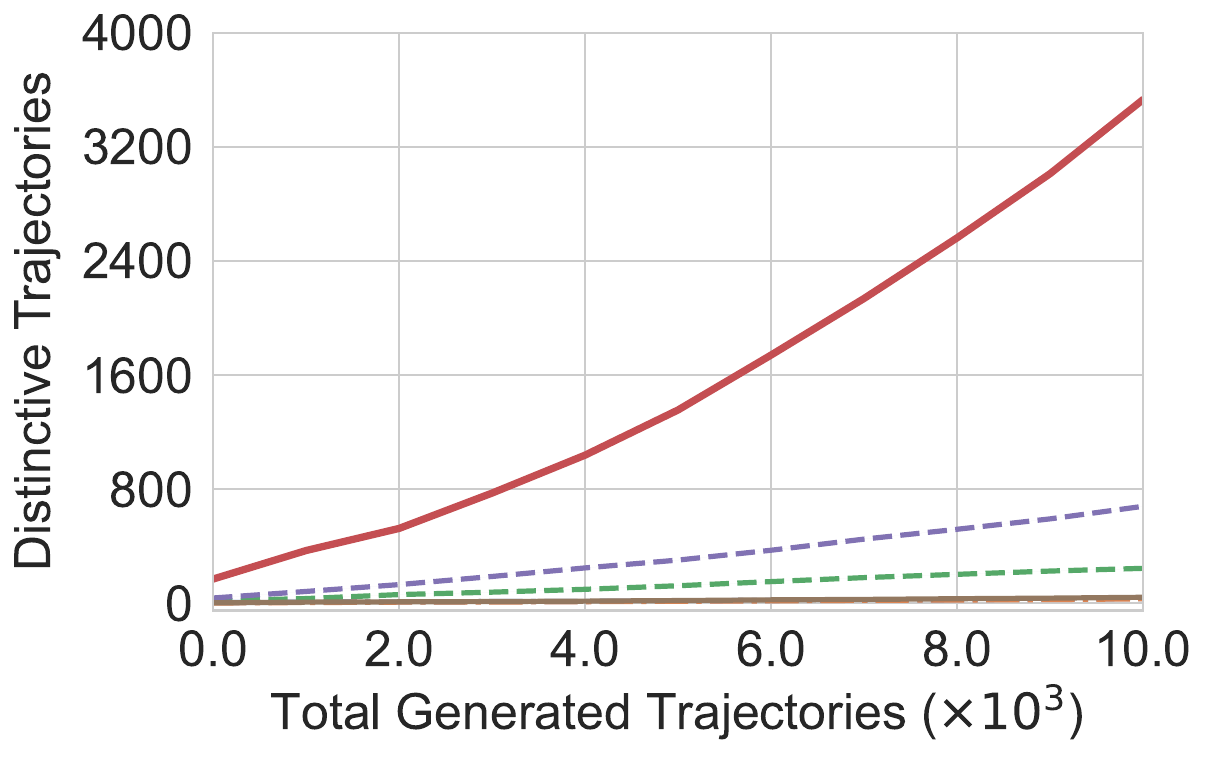}
	}
        \captionsetup{labelfont={color=black}}
 	\caption{Comparison results of IDDPG, MADDPG, COVDN, COMIX, FACMAC and MACFN on 2-Agent-Reacher-Sparse~($N=2$), 2-Agent-Swimmer-Sparse~($N=2$), and 3-Agent-Hopper-Sparse~($N=3$) scenarios. \textbf{Top:} Average test return of different methods.   \textbf{Bottom:} Number of distinctive trajectories during the training process.}
	\label{mujoco_reward}
\end{figure*}

The average test return of different MAMuJoCo scenarios is shown in Figure~\ref{mujoco_reward}(a)-(c).
In all scenarios, our proposed MACFN successfully improves the learning efficiency and the final performance. In the simple scenarios~(\emph{2-Agent-Reacher-Sparse}, \emph{2-Agent-Swimmer-Sparse}), the robot is controlled by two agents. Thus, the baseline, COVDN, can also attain comparable performances. However, in the challenge scenario~(\emph{3-Agent-Hopper-Sparse}), our method consistently exhibits significantly better performance than baselines during training. The results suggest that MACFN can be utilized to explore the diverse reward distributions, which helps the agents construct a more useful policy and achieve non-trivial performance.

Figure~\ref{mujoco_reward}(d)-(f) reports the number of valid-distinctive trajectories explored in different MAMuJoCo scenarios. As observed in the figure, MACFN can offer promising exploration ability, with a significantly higher number of explored trajectories compared to other baselines. While COVDN achieves competitive results of average test return, the trajectories it generates exhibit substantial similarity, indicating limited exploration capacity.
In summary, these findings demonstrate that our proposed MACFN not only exhibits notable performance improvements but also superior exploration ability.

\section{Conclusion}\label{sec:conclusion}
In this work, we develop a theory for generalized multi-agent GFlowNets, involving continuous control. 
Based on the theoretical formulation, we propose a novel training framework, termed MACFN, that enables us to learn decentralized individual-flow-based policies via centralized global-flow-based matching.
The continuous flow decomposition network is at the heart of MACFN, where agents can deduce their flow contributions given only global rewards.
We validate MACFN over several multi-agent control tasks with sparse rewards and showcase that it yields results significantly superior to the state-of-the-art techniques in the continuous setting, especially the exploration capability. To our knowledge, this work is the first attempt to extend GFlowNets to the multi-agent continuous control domain. 

One of the primary limitations of MACFN is its potential inefficiency in situations where only a single optimal solution is sought. Unlike traditional RL agents that are trained to maximize returns, MACFN does not inherently prioritize the most effective single strategy. Instead, it focuses on the broader generation of possible solutions, which may lead to a less direct path toward the optimal solution. MACFN is more suitable for tasks where the goal is to generate a diverse set of high-quality solutions. This can result in increased computational demands and a slower convergence rate when the task does not require diverse strategies.
Thus, MACFN serves as a complementary approach to reinforcement learning, particularly useful in exploratory control tasks.
In our future work, we will improve MACFN under the dynamic multi-agent setting, which is more challenging in adaptive flow matching as the flow number of agents changes over time. Besides, some environments may not satisfy the translation assumption, and it is also an exciting direction to consider more complex transition functions of environments.

\section*{Acknowledge}
This work was supported by the National Key Research and Development Project of China (No.2021ZD0110505), the National Natural Science Foundation of China (No.U19B2042), the Zhejiang Provincial Key Research and Development Project (No.2022C01044), University Synergy Innovation Program of Anhui Province (No.GXXT-2021-004), Academy Of Social Governance Zhejiang University, Fundamental Research Funds for the Central Universities (No.226-2022-00064).

  \bibliographystyle{elsarticle-harv} 
  \bibliography{macfn}

\appendix

\newpage

\renewcommand\thefigure{A\arabic{figure}}
\renewcommand\thetable{A\arabic{table}}
\setcounter{figure}{0}
\setcounter{table}{0}

\section{Abbreviations}

To improve the readability and conciseness, we provide a list of abbreviations throughout the paper in Table~\ref{abbreviations}.

\begin{table}[!h]
\captionsetup{labelfont={color=black}}
\caption{List of abbreviations.}
\label{abbreviations}
\begin{tabular}{@{}lllll@{}}
\toprule
Abbreviation &                                                            &  &  &  \\ \midrule
CTDE          & Centralized Training with Decentralized Execution          &  &  &  \\
DAG           & Directed Acyclic Graph                                     &  &  &  \\
Dec-POMDP     & Decentralized Partially Observable Markov Decision Process &  &  &  \\
GFlowNets     & Generative Flow Networks                                   &  &  &  \\
IDDPG         & Independent DDPG                                           &  &  &  \\
MACFN         & Multi-Agent generative Continuous Flow Networks            &  &  &  \\
MARL            & Multi-Agent Reinforcement Learning                       &  &  &  \\ 
MAS           & Multi-Agent Systems                                        &  &  &  \\
MPE           & Multi-Agent Particle Environment                           &  &  &  \\
PyMARL        & Python MARL framework                                      &  &  &  \\ 
RL            & Reinforcement Learning                                     &  &  &  \\  \bottomrule
\end{tabular}
\end{table}

\section{Proofs}
\subsection{Proof of Lemma~1}\label{PL1}
\noindent\textbf{Lemma~1.} {\em
    Let $\pi(\boldsymbol{a_t} \mid s_t)=\frac{F\left(s_t, \boldsymbol{a}_t\right)}{F\left(s_t\right)}$ denotes the joint policy, and $\pi_i\left(a_i \mid o_i\right)$ denotes the individual policy of agent $i$. 
    Under Definition~\ref{def-decomposition}, we have
    \begin{align}
    \pi(\boldsymbol{a} \mid s) =\prod_{i=1}^N \pi_i\left(a_i \mid o_i\right).
    \end{align}
}

\begin{proof} 

    First, for any $s \neq s_f$ and $s \neq s_0$, the set of complete trajectories passing $s$ is the disjoint union of the sets of trajectories passing $s \rightarrow s'$. 
    According to the definition, the state flow
    \begin{equation}
    F(s_t)= \int_{\boldsymbol{a}_t \in \mathcal{A}} F(s_t,\boldsymbol{a}_t) \mathrm{d} \boldsymbol{a}_t.    
    \end{equation}
    Similarly, the observation flow of agent $i$ can defined as 
    \begin{equation}
    F(o_t^i)= \int_{a_t^i \in \mathcal{A}} F(o_t^i,a_t^i) \mathrm{d} a_t^i.   
    \end{equation}
    Since the flow function $F(o_t^i)$ of each agent are independent, we have
    \begin{align}
        \prod_{i=1}^N F_i(o_t^i) &=\prod_{i=1}^N \int_{a_t^i \in \mathcal{A}} F(o_t^i,a_t^i) \mathrm{d} a_t^i 
        \\&= \int_{a_t^1 \in \mathcal{A}} F(o_t^1,a_t^1) \mathrm{d} a_t^1 \cdots \int_{a_t^N \in \mathcal{A}} F(o_t^N,a_t^N) \mathrm{d} a_t^N  
        \\&= \int_{(a_t^1,\cdots,a_t^N) \in \mathcal{A}^1 \times \cdots \times \mathcal{A}^N} F(o_t^1,a_t^1) \cdots F(o_t^N,a_t^N)
        \mathrm{d} a_t^1\cdots a_t^N 
        \\&= \int_{\boldsymbol{a}_t \in \mathcal{A}} F(s_t,\boldsymbol{a}_t) \mathrm{d} \boldsymbol{a}_t = F(s_t). 
    \end{align}
    Therefore, the joint policy 
    \begin{align}
    \pi(\boldsymbol{a} \mid s)=\frac{F\left(s_t, \boldsymbol{a}_t\right)}{F\left(s_t\right)}=\frac{\prod _{i=1}^N F_i\left(o_t^i, a_t^i\right)}{\prod_{i=1}^N F_i(o_t^i)} =\prod_{i=1}^N \pi_i\left(a_i \mid o_i\right).
    \end{align}
    Then we complete the proof.
\end{proof}

\subsection{Proof of Lemma~2}\label{PL2}
\noindent\textbf{Lemma~2.} {\em
For any state $s_{t}$, its outflows and inflows are calculated as follows: 
\begin{align}
    &\int_{s\in \mathcal{C} (s_t)}  F(s_t \rightarrow s) \mathrm{d} s = \int_{\boldsymbol{a} \in \mathcal{A}} F(s_t,\boldsymbol{a}_t) \mathrm{d} \boldsymbol{a}_t 
    \\&= \int_{\boldsymbol{a} \in \mathcal{A}} \prod_{i=1}^N F_i(o_t^i, a_t^i) \mathrm{d} \boldsymbol{a}_t
    = \prod_{i=1}^N \int_{a_t^i \in \mathcal{A}^i} F_i(o_t^i, a_t^i) \mathrm{d} a_t^i,
\end{align}
and
\begin{align}
    &\int_{s\in \mathcal{P} (s_t)}  F(s\rightarrow s_t) \mathrm{d} s  = \int_{\boldsymbol{a}:T(s,\boldsymbol{a})= s_{t}} F(s,\boldsymbol{a}) \mathrm{d} \boldsymbol{a} \\
    &=\int_{\boldsymbol{a}:T(s,\boldsymbol{a})= s_{t}} \prod_{i=1}^N F_i(o^i, a^i) \mathrm{d} \boldsymbol{a} 
    =\prod_{i=1}^N\int_{a^i:T(o^i,a^i)= o_t^i} F_i(G_\phi(o_t^i,a^i), a^i) \mathrm{d} a^i,
\end{align}
where $\boldsymbol{a}$ is the unique action that transition to $s_t$ from $s$ and $o^i=G_\phi(o_t^i,a^i)$ with $o_t^i=T(o^i,a^i)$.
}

\begin{proof}
According to Assumption~\ref{assumption0}, for any state pair $(s_t,s_{t+1})$, there is a unique joint action $\boldsymbol{a_t}$ such that $T(s_t,\boldsymbol{a_t}) = s_{t+1}$ and the action is the translation action, so we can represent outflows in Definition~\ref{def-outflows} and inflows in Definition~\ref{def-inflows} as an integral over actions as
\begin{align}
    \int_{s\in \mathcal{C} (s_t)}  F(s_t \rightarrow s) \mathrm{d} s = \int_{\boldsymbol{a} \in \mathcal{A}} F(s_t,\boldsymbol{a}_t) \mathrm{d} \boldsymbol{a}_t,
\end{align}
and
\begin{equation}
\int_{s\in \mathcal{P} (s_t)}  F(s\rightarrow s_t) \mathrm{d} s = \int_{\boldsymbol{a}:T(s,\boldsymbol{a})= s_{t}} F(s,\boldsymbol{a}) \mathrm{d} \boldsymbol{a}.
\end{equation}
According to Definition~\ref{def-decomposition}, we can decompose the above integral into the integrals over the actions of individual agents as
\begin{align}
    \int_{\boldsymbol{a} \in \mathcal{A}} F(s_t,\boldsymbol{a}_t) \mathrm{d} \boldsymbol{a}_t 
    = \int_{\boldsymbol{a} \in \mathcal{A}} \prod_{i=1}^N F_i(o_t^i, a_t^i) \mathrm{d} \boldsymbol{a}_t,
\end{align}
and
\begin{align}
    \int_{\boldsymbol{a}:T(s,\boldsymbol{a})= s_{t}} F(s,\boldsymbol{a}) \mathrm{d} \boldsymbol{a} 
    &=\int_{\boldsymbol{a}:T(s,\boldsymbol{a})= s_{t}} \prod_{i=1}^N F_i(o^i, a^i) \mathrm{d} \boldsymbol{a}.
\end{align}
Note that $\mathcal{A} = \mathcal{A}^1 \times \cdots \times \mathcal{A}^N$ and $o^i=G_\phi(o_t^i,a^i)$. So we have
\begin{align}
    \int_{\boldsymbol{a} \in \mathcal{A}} \prod_{i=1}^N F_i(o_t^i, a_t^i) \mathrm{d} \boldsymbol{a}_t
    = \prod_{i=1}^N \int_{a_t^i \in \mathcal{A}^i} F_i(o_t^i, a_t^i) \mathrm{d} a_t^i,
\end{align}
and
\begin{align}
    \int_{\boldsymbol{a}:T(s,\boldsymbol{a})= s_{t}} \prod_{i=1}^N F_i(o^i, a^i) \mathrm{d} \boldsymbol{a}
    =\prod_{i=1}^N\int_{a^i:T(o^i,a^i)= o_t^i} F_i(G_\phi(o_t^i,a^i), a^i) \mathrm{d} a^i.
\end{align}
Then we complete the proof.
\end{proof}

\subsection{Proof of Lemma~4}\label{PL5}
\noindent\textbf{Lemma~4.} {\em
Let $\{a^{1,k},...,a^{N,k}\}_{k=1}^{K}$ be sampled independently and uniformly from the continuous action space $\mathcal{A}^1 \times \cdots \times \mathcal{A}^N$.
Assume $G_{\phi^\star}$ can optimally output the actual state $o_t^i$ with $(o_{t+1}^i,a_t^i)$. Then for any state $s_t \in \mathcal{S}$, we have
\begin{equation}
    \mathbb{E}\left[\frac{\mu(\mathcal{A})}{K} \sum_{k=1}^K \prod_{i=1}^N  F(o_t^{i},a_t^{i,k}) \right]
    = \int_{\boldsymbol{a} \in \mathcal{A}}F(s_t,\boldsymbol{a})\mathrm{d} \boldsymbol{a}
\end{equation}
and
\begin{equation}
    \mathbb{E}\bigg[\frac{\mu(\mathcal{A})}{K} \sum_{k=1}^K \prod_{i=1}^N F(G_{\phi^\star} (o_t^{i}, a_t^{i,k}),a_t^{i,k})\bigg]
    =  \int_{\boldsymbol{a}:T(s,\boldsymbol{a})=s_t}F(s,\boldsymbol{a}) \mathrm{d} \boldsymbol{a}.
\end{equation}
}

\begin{proof}
    
    Since $\{a^{1,k},...,a^{N,k}\}_{k=1}^{K}$ is sampled independently and uniformly from the continuous action space $\mathcal{A}^1 \times \cdots \times \mathcal{A}^N$, we have
    \begin{equation}
        \mathbb{E}\left[ F(s_t,\boldsymbol{a}_t^k)\right]
        = \frac{1}{\mu(\mathcal{A})}\int_{\boldsymbol{a} \in \mathcal{A}}F(s_t,\boldsymbol{a})\mathrm{d} \boldsymbol{a}.
    \end{equation}
    Hence
    \begin{align}
        \mathbb{E}\left[\frac{\mu(\mathcal{A})}{K} \sum_{k=1}^K \prod_{i=1}^N  F(o_t^{i},a_t^{i,k}) \right] 
        = \frac{\mu(\mathcal{A})}{K}  \sum_{k=1}^K  \mathbb{E}\left[ F(s_t,\boldsymbol{a}_t^k)\right]
        = \int_{\boldsymbol{a} \in \mathcal{A}}F(s_t,\boldsymbol{a})\mathrm{d} \boldsymbol{a}.
    \end{align}
    Note that for any pair of $(s,\boldsymbol{a})$ satisfying $T(s,\boldsymbol{a})=s_t$, $s$ is unique if we fix $\boldsymbol{a}$. So we have
    \begin{align}
        \mathbb{E}\left[F(G_{\phi^\star} (s_t, \boldsymbol{a}_k), \boldsymbol{a}_k)\right] =\mathbb{E}\left[F(s_{t-1}, \boldsymbol{a}_k)\right] 
        = \mathbb{E}\left[\prod_{i=1}^N F(o_{t-1}^i, a_t^{i,k})\right]
        % = \prod_{i=1}^N\mathbb{E}\left[F(o_{t-1}^i, a_t^{ki})\right]
    \end{align}
    and
    \begin{equation}
        \mathbb{E}\left[F(G_{\phi^\star} (s_t, \boldsymbol{a}_k), \boldsymbol{a}_k)\right]
        =  \frac{1}{\mu(\mathcal{A})} \int_{\boldsymbol{a}:T(s,\boldsymbol{a})=s_t}F(s,\boldsymbol{a}) \mathrm{d} \boldsymbol{a}.
    \end{equation}
    Since $o_t^i = G_{\phi^\star}(s_t,i)$, we have
    
    \begin{equation}
        \mathbb{E}\left[\frac{\mu(\mathcal{A})}{K} \sum_{k=1}^K \prod_{i=1}^N F(G_{\phi^\star}(o_t^{i}, a_t^{i,k}),a_t^{i,k})\right]
        =  \int_{\boldsymbol{a}:T(s,\boldsymbol{a})=s_t}F(s,\boldsymbol{a}) \mathrm{d} \boldsymbol{a}.
    \end{equation}
    Then we complete the proof.
\end{proof}

\subsection{Proof of Theorem~1}\label{PT2}
\noindent\textbf{Theorem~1.} {\em
Let $\{a^{1,k},...,a^{N,k}\}_{k=1}^{K}$ be sampled independently and uniformly from the continuous action space $\mathcal{A}^1 \times \cdots \times \mathcal{A}^N$. Assume $G_{\phi^\star}$ can optimally output the actual state $o_t^i$ with $(o_{t+1}^i,a_t^i)$. For any bounded continuous action $\boldsymbol{a} \in \mathcal{A}$, any state $s_t \in \mathcal{S}$ and any $\delta>0$, we have
    \begin{multline} 
        \mathbb{P}\left(\Big{|}\frac{\mu(\mathcal{A})}{K} \sum_{k=1}^K \prod_{i=1}^N  F(o_t^{i},a_t^{i,k}) -\int_{\boldsymbol{a} \in \mathcal{A}}F(s_t,\boldsymbol{a})\mathrm{d} \boldsymbol{a} \Big{|}\right.  \\  \left. \ge \frac{\delta L\mu(\mathcal{A}) \rm{diam}(\mathcal{A}) }{\sqrt{K}} \right)
        \le 2\exp\left(-\frac{\delta^2}{2}\right)
    \end{multline}
    and 
    \begin{multline}
       \mathbb{P}\left( \Big{|}\frac{\mu(\mathcal{A})}{K} \sum_{k=1}^K \prod_{i=1}^N F(G_{\phi} (o_t^{i}, a_t^{i,k}),a_t^{i,k})    - \int_{\boldsymbol{a}:T(s,\boldsymbol{a})=s_t}F(s,a) \mathrm{d} \boldsymbol{a} \Big{|} \right.  \\  \left.
         \ge \frac{\delta L\mu(\mathcal{A}) [\rm{diam}(\mathcal{A}) + \rm{diam}(\mathcal{S})]}{\sqrt{K}} + \frac{\mu(\mathcal{A}) N^\alpha}{K^\beta} \right)\\ 
       \le 2\exp\left(-\frac{\delta^2}{2}\right),
    \end{multline}
    where $L$ is the Lipschitz constant of the function $F(s_t,a)$, $\rm{diam}(\mathcal{A})$ denotes the diameter of the action space $\mathcal{A}$ and $\mu(\mathcal{A})$ denotes the measure of the action space $\mathcal{A}$.
}

\begin{proof} 
      
      Based on Lemma~\ref{lm:expectation}, we define
      \begin{align}
          \Gamma_k &= \frac{\mu(\mathcal{A})}{K} F(s_t,{\boldsymbol{a}_t^k})-\frac{1}{K} \int_{\boldsymbol{a} \in \mathcal{A}}F(s_t,\boldsymbol{a})\mathrm{d} \boldsymbol{a} \\&= \frac{\mu(\mathcal{A}^1)\cdots\mu(\mathcal{A}^N)}{K}  F(s_t,{\boldsymbol{a}_t^k})-\frac{1}{K} \int_{\boldsymbol{a} \in \mathcal{A}} F(s_t,\boldsymbol{a})\mathrm{d} \boldsymbol{a} \\&=
          \frac{\prod_{i=1}^N\mu(\mathcal{A}^i) \cdot F(o_t^{i},a_t^{i,k})}{K}-\frac{1}{K} \int_{\boldsymbol{a} \in \mathcal{A}}F(s_t,\boldsymbol{a})\mathrm{d} \boldsymbol{a}
           \\&= \frac{1}{K} \int_{\boldsymbol{a} \in \mathcal{A}}\left[ \prod_{i=1}^N F(o_t^{i},a_t^{i,k}) - F(s_t,\boldsymbol{a})\right]\mathrm{d} \boldsymbol{a} 
      \end{align}
      and
      \begin{align}
          \Lambda_k&= \frac{\mu(\mathcal{A})}{K}  F(G_{\phi^\star} (s_t, \boldsymbol{a}_k), \boldsymbol{a}_k)-\frac{1}{K} \int_{\boldsymbol{a}:T(s,\boldsymbol{a})=s_t}F(s,\boldsymbol{a}) \mathrm{d} \boldsymbol{a}\\
          &= \frac{\mu(\mathcal{A}_1)\cdots\mu(\mathcal{A}_N)}{K}  F(G_{\phi^\star} (s_t, \boldsymbol{a}_k), \boldsymbol{a}_k)-\frac{1}{K} \int_{\boldsymbol{a}:T(s,\boldsymbol{a})=s_t}F(s,\boldsymbol{a}) \mathrm{d} \boldsymbol{a} \\&= \frac{\prod_{i=1}^N\mu(\mathcal{A}_i) \cdot F(G_{\phi^\star} (o_t^i, a_t^{i,k}),a_t^{i,k})}{K}-\frac{1}{K}\int_{\boldsymbol{a}:T(s,\boldsymbol{a})=s_t}F(s,\boldsymbol{a}) \mathrm{d} \boldsymbol{a}\\&=
           \frac{1}{K} \int_{\boldsymbol{a}:T(s,\boldsymbol{a})=s_t} \left[ \prod_{i=1}^N F(G_{\phi^\star} (o_t^i, a_t^{i,k}),a_t^{i,k}) - F(s,\boldsymbol{a})\right] \mathrm{d} \boldsymbol{a},
      \end{align}
      which yields
      \begin{multline}
          \mathbb{P}\left(\Big{|}\frac{\mu(\mathcal{A})}{K} \sum_{k=1}^K \prod_{i=1}^N  F(o_t^{i},a_t^{i,k})-\int_{\boldsymbol{a} \in \mathcal{A}}F(s_t,\boldsymbol{a})\mathrm{d} \boldsymbol{a} \Big{|} \ge t \right) \\= \mathbb{P}\left( \Big{|} \sum_{k=1}^{K} \Gamma_k \Big{|} \ge t\right)
      \end{multline}
      and 
      \begin{multline}
          \mathbb{P}\left( \Big{|}\frac{\mu(\mathcal{A})}{K} \sum_{k=1}^K \prod_{i=1}^N F(G_{\phi^\star} (o_t^{i}, a_t^{i,k}),a_t^{i,k}) -\int_{\boldsymbol{a}:T(s,\boldsymbol{a})=s_t}F(s,a) \mathrm{d} \boldsymbol{a} \Big{|} \ge t \right) \\= \mathbb{P}\left( \Big{|} \sum_{k=1}^{K} \Lambda_k \Big{|} \ge t\right)
      \end{multline}
      for any $t > 0$.
      Notice that the variables $\{\Gamma_k\}_{k=1}^{K}$ are independent and $\mathbb{E}[\Gamma_k]=0, k=1,\ldots,K$. Using the fact that $F(s,a)$ is a Lipschitz function, we have
      \begin{align}
          |\Gamma_k|&\le 
          \frac{1}{K} \int_{\boldsymbol{a} \in \mathcal{A}}\left[ \prod_{i=1}^N F(o_t^{i},a_t^{i,k}) - F(s_t,\boldsymbol{a})\right]\mathrm{d} \boldsymbol{a} 
          \\
          & =
          \frac{1}{K} \int_{\boldsymbol{a} \in \mathcal{A}}\left[ F(s_t,\boldsymbol{a}^k_t) - F(s_t,\boldsymbol{a})\right]\mathrm{d} \boldsymbol{a}  \\
          & \le \frac{L}{K}  \int_{a \in \mathcal{A}}|| \boldsymbol{a}^k_t-\boldsymbol{a}||\mathrm{d} \boldsymbol{a}\\
          & \le  \frac{L \mu(\mathcal{A})\rm{diam}(\mathcal{A})}{K}.
      \end{align}
      Since for any pair of $(s, \boldsymbol{a})$ satisfying $T(s,\boldsymbol{a})=s_t$, $s$ is unique if we fix $\boldsymbol{a}$,~we~have
      \begin{align}
          |\Lambda_k| & \le 
          \frac{1}{K} \int_{\boldsymbol{a}:T(s,\boldsymbol{a})=s_t} \left[ \prod_{i=1}^N F(G_{\phi^\star} (o_t^i, a_t^{i,k}),a_t^{i,k}) - F(s,\boldsymbol{a})\right] \mathrm{d} \boldsymbol{a}  \\
          & = \frac{1}{K} \int_{\boldsymbol{a}:T(s,\boldsymbol{a})=s_t} \big{|}F(G_{\phi^\star} (s_t, \boldsymbol{a}^k_t), \boldsymbol{a}^k_t)-F(s,\boldsymbol{a})\big{|} \mathrm{d} \boldsymbol{a}  \\
          & \le \frac{1}{K} \int_{\boldsymbol{a}:T(s,\boldsymbol{a})=s_t} \big{|}F(G_{\phi^\star} (s_t, \boldsymbol{a}^k_t), \boldsymbol{a}^k_t)
          -F(s,\boldsymbol{a}^k_t)
          +F(s,\boldsymbol{a}^k_t)
          -F(s,\boldsymbol{a})
          \big{|} \mathrm{d} \boldsymbol{a}  \\
          & \le \frac{L}{K} \int_{\boldsymbol{a}:T(s,a)=s_t} 
          || G_{\phi^\star} (s_t, \boldsymbol{a}^k_t)-s ||+
          ||\boldsymbol{a}_k-\boldsymbol{a} ||\mathrm{d} \boldsymbol{a} \\
          & \le \frac{L \mu(\mathcal{A})[\rm{diam}(\mathcal{A})+\rm{diam}(\mathcal{S})]}{K}.
      \end{align}
      
      \begin{lemma}[Hoeffding's inequality,~\citet{vershynin2018high}] 
        \label{lm:hoeffding}
        Let $x_1,\ldots,x_K$ be independent random variables. Assume the variables $\{x_k\}_{k=1}^{K}$ are bounded in the interval $[T_l,T_r]$. Then for any $t>0$, we have
      \begin{equation}
          \mathbb{P}\left( \Big{|}\sum_{k=1}^{K}(x_k-\mathbb{E}x_k)\Big{|} \ge t \right) \le 2\exp\left(-\frac{2t^2}{K(T_r-T_l)^2}\right).
      \end{equation}
        \end{lemma}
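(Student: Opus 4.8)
The plan is to prove Hoeffding's inequality by the classical Chernoff (exponential moment) method, with the only substantive ingredient being Hoeffding's lemma on the moment generating function of a bounded, centered random variable. First I would center the variables by setting $Y_k = x_k - \mathbb{E}x_k$, so that the $Y_k$ are independent, satisfy $\mathbb{E}[Y_k]=0$, and each lies almost surely in an interval of width $T_r - T_l$. Fixing $\lambda > 0$ and applying Markov's inequality to the exponentiated sum gives, for any $t>0$,
\begin{equation}
\mathbb{P}\left(\sum_{k=1}^K Y_k \ge t\right) \le e^{-\lambda t}\,\mathbb{E}\left[\exp\left(\lambda \sum_{k=1}^K Y_k\right)\right] = e^{-\lambda t}\prod_{k=1}^K \mathbb{E}\left[e^{\lambda Y_k}\right],
\end{equation}
where the factorization of the expectation uses independence of the $Y_k$.

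The key technical step is to control each factor $\mathbb{E}[e^{\lambda Y_k}]$, for which I would establish Hoeffding's lemma: if $\mathbb{E}[Y]=0$ and $Y \in [a,b]$ almost surely, then $\mathbb{E}[e^{\lambda Y}] \le \exp(\lambda^2 (b-a)^2/8)$. The argument uses convexity of $u \mapsto e^{\lambda u}$ to obtain the pointwise bound $e^{\lambda Y} \le \frac{b-Y}{b-a}e^{\lambda a} + \frac{Y-a}{b-a}e^{\lambda b}$; taking expectations and using $\mathbb{E}[Y]=0$ bounds $\mathbb{E}[e^{\lambda Y}]$ by $e^{\psi(h)}$, where $h = \lambda(b-a)$ and $\psi$ is a smooth function satisfying $\psi(0)=\psi'(0)=0$ and $\psi''(h) \le 1/4$ for all $h$. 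A second-order Taylor expansion then yields $\psi(h) \le h^2/8$, which is exactly the claimed bound with $b-a = T_r - T_l$.

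Substituting Hoeffding's lemma into the product gives
\begin{equation}
\mathbb{P}\left(\sum_{k=1}^K Y_k \ge t\right) \le \exp\left(-\lambda t + \frac{\lambda^2 K (T_r-T_l)^2}{8}\right),
\end{equation}
and I would then minimize the exponent over $\lambda > 0$. The minimizer is $\lambda^\star = 4t/(K(T_r-T_l)^2)$, which produces the one-sided tail bound $\exp(-2t^2/(K(T_r-T_l)^2))$. Applying the identical argument to $-Y_k$ controls the lower tail, and a union bound over the two one-sided events introduces the factor of $2$, giving the two-sided statement. The main obstacle is Hoeffding's lemma itself, in particular the uniform estimate $\psi''(h) \le 1/4$: this is where the constant $8$ in the moment generating function bound, and hence the constant $2$ in the final exponent, originates, while the Chernoff bounding and the optimization over $\lambda$ are routine.
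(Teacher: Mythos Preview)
Your proof is correct and is the standard textbook argument via the Chernoff method and Hoeffding's lemma. Note, however, that the paper does not prove this statement at all: it is quoted as a known result from \citet{vershynin2018high} and invoked as a black box inside the proof of Theorem~\ref{thm: tail}. So there is no ``paper's own proof'' to compare against; your write-up simply supplies the classical justification that the paper omits by citation.
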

        
      Together with Lemma~\ref{lm:hoeffding} by respectively setting $T_r={L \mu(\mathcal{A})\rm{diam}(\mathcal{A})}/{K}$, $T_l=-{L \mu(\mathcal{A})\rm{diam}(\mathcal{A})}/{K}$ and $T_r={L \mu(\mathcal{A})[\rm{diam}(\mathcal{A})+\rm{diam}(\mathcal{S})]}/{K}$, $T_l=-{L \mu(\mathcal{A})[\rm{diam}(\mathcal{A})+\rm{diam}(\mathcal{S})]}/{K}$, we obtain
      \begin{multline} \label{eq: nophi}
          \mathbb{P}\left(\Big{|}\frac{\mu(\mathcal{A})}{K} \sum_{k=1}^K \prod_{i=1}^N  F(o_t^{i},a_t^{i,k})-\int_{\boldsymbol{a} \in \mathcal{A}}F(s_t,\boldsymbol{a})\mathrm{d} \boldsymbol{a} \Big{|} \ge t \right) \\
          \le 2\exp\left(-\frac{Kt^2}{2(L \mu(\mathcal{A})\rm{diam}(\mathcal{A}))^2}\right)
      \end{multline}
      and 
      \begin{multline} \label{eq: phi_star}
          \mathbb{P}\left( \Big{|}\frac{\mu(\mathcal{A})}{K} \sum_{k=1}^K \prod_{i=1}^N F(G_{\phi^\star} (o_t^{i}, a_t^{i,k}),a_t^{i,k}) -\int_{\boldsymbol{a}:T(s,\boldsymbol{a})=s_t}F(s,a) \mathrm{d} \boldsymbol{a} \Big{|} 
           \ge t \right)  \\
           \le 2\exp\left(-\frac{Kt^2}{2(L \mu(\mathcal{A})[\rm{diam}(\mathcal{A}) + \rm{diam}(\mathcal{S})])^2}\right).
      \end{multline}
      To simplify the analysis of the probability bound, by setting
      \begin{equation}
          t=\frac{L\mu(\mathcal{A}) \rm{diam}(\mathcal{A}) \delta}{\sqrt{K}}
      \end{equation}
      in \eqref{eq: nophi} for any $\delta > 0$, we obtain
      \begin{multline}\label{eq: nophi2} 
        \mathbb{P}\left(\Big{|}\frac{\mu(\mathcal{A})}{K} \sum_{k=1}^K \prod_{i=1}^N  F(o_t^{i},a_t^{i,k}) -\int_{\boldsymbol{a} \in \mathcal{A}}F(s_t,\boldsymbol{a})\mathrm{d} \boldsymbol{a} \Big{|} \ge \frac{\delta L\mu(\mathcal{A}) \rm{diam}(\mathcal{A}) }{\sqrt{K}} \right) \\
        \le 2\exp\left(-\frac{\delta^2}{2}\right)
    \end{multline}
      By using the triangle inequality and Assumption~\ref{assumption0}, we have
      \allowdisplaybreaks
      \begin{align}
          \Big{|}&\frac{\mu(\mathcal{A})}{K} \sum_{k=1}^K \prod_{i=1}^N  F(G_{\phi} (o_t^{i}, a_t^{i,k}),a_t^{i,k})  -\int_{\boldsymbol{a}:T(s,\boldsymbol{a})=s_t}F(s,a) \mathrm{d} \boldsymbol{a} \Big{|} \\ \le 
          \Big{|} &\frac{\mu(\mathcal{A})}{K}  \sum_{k=1}^K \prod_{i=1}^N F(G_{\phi^\star} (o_t^{i}, a_t^{i,k}),a_t^{i,k}) -\int_{\boldsymbol{a}:T(s,\boldsymbol{a})=s_t}F(s,a) \mathrm{d} \boldsymbol{a} \Big{|}   \\
          &+\Big{|}\frac{\mu(\mathcal{A})}{K}  \sum_{k=1}^K \prod_{i=1}^N F(G_{\phi^\star} (o_t^{i}, a_t^{i,k}),a_t^{i,k}) -\frac{\mu(\mathcal{A})}{K} \sum_{k=1}^K \prod_{i=1}^N F(G_{\phi} (o_t^{i}, a_t^{i,k}),a_t^{i,k}) \Big{|}  \\
          \le 
          \Big{|} &\frac{\mu(\mathcal{A})}{K}  \sum_{k=1}^K \prod_{i=1}^N F(G_{\phi^\star} (o_t^{i}, a_t^{i,k}),a_t^{i,k}) -\int_{\boldsymbol{a}:T(s,\boldsymbol{a})=s_t}F(s,a) \mathrm{d} \boldsymbol{a} \Big{|}  \\
          &+ \frac{\mu(\mathcal{A})}{K}  \sum_{k=1}^K \Big{|} \prod_{i=1}^N F(G_{\phi^\star} (o_t^{i}, a_t^{i,k}),a_t^{i,k}) - \prod_{i=1}^N F(G_{\phi} (o_t^{i}, a_t^{i,k}),a_t^{i,k}) \Big{|}  \\
          \le  \Big{|} &\frac{\mu(\mathcal{A})}{K}  \sum_{k=1}^K \prod_{i=1}^N F(G_{\phi^\star} (o_t^{i}, a_t^{i,k}),a_t^{i,k}) -\int_{\boldsymbol{a}:T(s,\boldsymbol{a})=s_t}F(s,a) \mathrm{d} \boldsymbol{a} \Big{|} +  \frac{\mu(\mathcal{A}) N^\alpha}{K^\beta}.
      \end{align}
      Then using \eqref{eq: phi_star} obtains 
      \begin{align} 
         &\mathbb{P}\left( \Big{|}\frac{\mu(\mathcal{A})}{K} \sum_{k=1}^K \prod_{i=1}^N F(G_{\phi} (o_t^{i}, a_t^{i,k}),a_t^{i,k}) -\int_{\boldsymbol{a}:T(s,\boldsymbol{a})=s_t}F(s,a) \mathrm{d} \boldsymbol{a} \Big{|} 
           \ge t \right) \\
           \le & \;
          \mathbb{P}\left( \Big{|}\frac{\mu(\mathcal{A})}{K} \sum_{k=1}^K \prod_{i=1}^N F(G_{\phi^\star} (o_t^{i}, a_t^{i,k}),a_t^{i,k}) -\int_{\boldsymbol{a}:T(s,\boldsymbol{a})=s_t}F(s,a) \mathrm{d} \boldsymbol{a} \Big{|} \ge t- \frac{\mu(\mathcal{A}) N^\alpha}{K^\beta} \right)
         \\
         \le & \; 2\exp\left(-\frac{K(t-\frac{\mu(\mathcal{A}) N^\alpha}{K^\beta})^2}{2(L \mu(\mathcal{A})[\rm{diam}(\mathcal{A}) + \rm{diam}(\mathcal{S})])^2}\right).\label{eq: sett}
      \end{align}
      To simplify the analysis of the probability bound, by setting
      \begin{equation}
          t=\frac{\delta L\mu(\mathcal{A}) [\rm{diam}(\mathcal{A}) + \rm{diam}(\mathcal{S})]}{\sqrt{K}} + \frac{\mu(\mathcal{A}) N^\alpha}{K^\beta}
      \end{equation}
      in \eqref{eq: sett} for any $\delta > 0$, we obtain
      \begin{multline}
        \mathbb{P}\left( \Big{|}\frac{\mu(\mathcal{A})}{K} \sum_{k=1}^K \prod_{i=1}^N F(G_{\phi} (o_t^{i}, a_t^{i,k}),a_t^{i,k})    - \int_{\boldsymbol{a}:T(s,\boldsymbol{a})=s_t}F(s,a) \mathrm{d} \boldsymbol{a} \Big{|} \right.  \\  \left.
          \ge \frac{\delta L\mu(\mathcal{A}) [\rm{diam}(\mathcal{A}) + \rm{diam}(\mathcal{S})]}{\sqrt{K}} + \frac{\mu(\mathcal{A}) N^\alpha}{K^\beta} \right)\\ 
        \le 2\exp\left(-\frac{\delta^2}{2}\right),
     \end{multline}
      Then we complete the proof.
\end{proof}

\end{document}